\newtheorem{proposition}{Proposition}
\newcommand{\eg}{\emph{e.g.,}\xspace}
\newcommand{\ie}{\emph{i.e.,}\xspace}
\newcommand{\etr}{\emph{effort-to-recourse}\xspace}
\newcommand{\ttr}{\emph{time-to-recourse}\xspace}
\newcommand{\fairselect}{Circumstance-Normalized Selection\xspace}
\newcommand{\cns}{\emph{CNS}\xspace}
\newcommand{\cda}{\emph{CDA}\xspace}
\title{Fairness in Algorithmic Recourse Through the Lens of Substantive Equality of Opportunity}
\author{
Andrew Bell\footnote{Equal contribution by authors}$^{1}$
\and
Joao Fonseca$^{*2}$\and
Carlo Abrate$^3$\and
Francesco Bonchi$^3$\And
Julia Stoyanovich$^1$\\
\affiliations
$^1$New York University
$^2$NOVA University
$^3$CENTAI
\emails
$^1$\{alb9742, stoyanovich\}@nyu.edu,
$^2$\{jpfonseca\}@novaims.unl.pt,
$^3$\{carlo.abrate, francesco.bonchi\}@centai.eu,
}
\begin{document}

\maketitle

\begin{abstract}
\label{abstract}
Algorithmic recourse---providing recommendations to those affected negatively by the outcome of an algorithmic system on how they can take action and change that outcome---has gained attention as a means of giving persons agency in their interactions with artificial intelligence (AI) systems. Recent work has shown that even if an AI decision-making classifier is ``fair'' (according to some reasonable criteria), recourse \emph{itself} may be unfair due to differences in the initial circumstances of individuals, compounding disparities for marginalized populations and requiring them to exert more effort than others. There is a need to define more methods and metrics for evaluating fairness in recourse that span a range of normative views of the world, and specifically those that take into account time. Time is a critical element in recourse because the longer it takes an individual to act, the more the setting may change due to model or data drift.

This paper seeks to close this research gap by proposing two notions of fairness in recourse that are in normative alignment with substantive equality of opportunity, and that consider time. The first considers the (often repeated) effort individuals exert per successful recourse event, and the second considers time per successful recourse event. Building upon an agent-based framework for simulating recourse, this paper demonstrates how much effort is needed to overcome disparities in initial circumstances. We then proposes an intervention to improve the fairness of recourse by rewarding effort, and compare it to existing strategies.

\end{abstract}

\section{Introduction}
\label{sec:introduction}


In tandem with the rapid proliferation of artificial intelligence (AI) systems in the public and private sectors, there are growing concerns about the significant risks posed by these systems, particularly for marginalized or historically disadvantaged groups~\cite{hu_2020, Sapiezynski2017AcademicPP, obermeyer2019dissecting}. As a result, researchers have given significant attention to assessing and mitigating the unfairness of \emph{predictive classifiers} (\eg~\cite{DBLP:journals/bigdata/Chouldechova17,feldman2015certifying,hardt2016equality,kamiran2009classifying,DBLP:conf/innovations/KleinbergMR17})
and \emph{algorithmic rankers}~\cite{zehlike2022fairness,zehlike2022fairness2}.  

Starting with the seminal work of~\citet{DBLP:conf/innovations/DworkHPRZ12}, it is broadly recognized that notions of algorithmic fairness are not purely technical,  but, rather, that they require thoughtful and explicit normative grounding. When algorithms are used to distribute access to resources or opportunities, then it is appropriate to ground fairness notions in equality of opportunity (EO) doctrines (e.g.,~\cite{DBLP:conf/eaamo/KhanMS22,DBLP:conf/fat/HeidariLGK19,DBLP:conf/innovations/DworkHPRZ12}).


An important idea that has emerged from work in the responsible use of AI is \emph{algorithmic recourse} --- providing recommendations to those receiving the negative outcome on how they can take action and change that outcome. Unfortunately, recent work has revealed that \emph{recourse mechanisms themselves} may be unfair~\cite{gupta2019equalizing,von2022fairness}. In the example below, we illustrate that individuals whose initial circumstances differ due to factors that are outside of their direct control may need to exert a vastly different amount of \emph{effort} to reverse a negative outcome.  This is unfair according to \emph{substantive equality of opportunity}, as we will discuss later in this section.


\paragraph*{Motivating example.}

Imagine Anne and Dawn, two students who repeatedly apply to a gifted program at their school, where their application is evaluated by an algorithm that offers recourse to denied students via recommendations like ``improve your math score by X\%.'' Both are denied the first semester they apply, and both plan to act on the recommendation --- work hard to improve their scores. The two students have different \emph{initial circumstances}, likely due to a socioeconomic difference between them: Ann's score is just below the cut-off for admissions, while Dawn's score is substantially lower.  Anne and Dawn both make \emph{highly significant} improvements to their scores and re-apply.  Ultimately, Anne is accepted into the program on her next attempt, but Dawn is not accepted even after she re-applies three times, substantially improving her score with each attempt.


Naturally, due to limited capacity, there will always be a score cut-off for the program --- but the amount of effort a student exerts may itself be a signal for admission.  


\paragraph*{Research gap.}

One may be tempted to think that unfairness in recourse will not arise if the underlying decision-making process is fair.  Yet, it has been shown that a classifier's fairness does not guarantee that recourse under that classifier will be fair~\cite{gupta2019equalizing, von2022fairness}. This motivates the development of a distinct suite of methods for detecting and mitigating unfairness in recourse that span a range of normative values and contexts. Despite important recent work~\cite{ehyaei2023robustness,gupta2019equalizing, von2022fairness, sharma2019certifai}, questions such as ``What does it mean for recourse to be fair?'' and ``How can we guarantee fair recourse?'' remain partially unanswered. 
Most notably, reasoning about the impact of \emph{time} on fairness in recourse has been overlooked.
Time is intrinsic in this setting: Recourse involves a first unsuccessful attempt at a favorable outcome at time $t$, followed by one or several subsequent attempts at times $t + \delta_1$, $t + \delta_2, ... t + \delta_n$. Continuously changing contexts can weaken the reliability of recourse recommendations over time due to data and model drift~\cite{altmeyer2023endogenous,ferrario2022robustness,DBLP:conf/eaamo/FonsecaBABS23,pawelczyk2022probabilistically}.   

In summary, our goal is to make progress towards filling this gap by \emph{characterizing fairness in recourse through the interplay between the starting qualifications of individuals and the amount of effort they need to exert over time to attain a favorable outcome}.

\begin{figure}[t!]
    \centering
    \includegraphics[width=\linewidth]{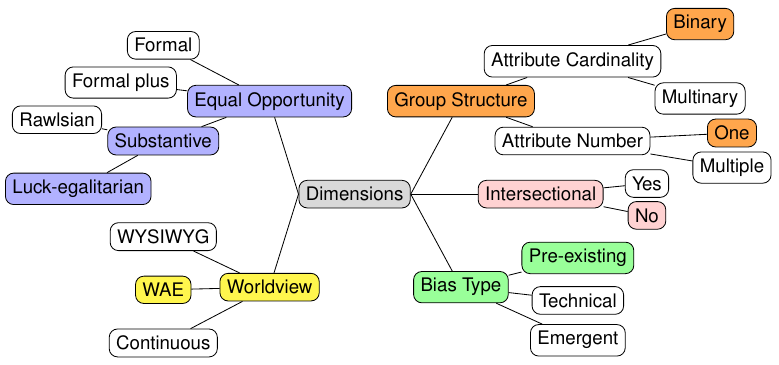}
    \caption{Summary of the normative dimensions of the fairness formalization and fairness-enhancing intervention used in this paper, according to the classification of \cite{zehlike2022fairness}.}
    \label{fig:mindmap}
\end{figure}

\paragraph*{Normative grounding.} Mitigating disadvantage due to morally-arbitrary factors and rewarding morally-relevant effort is consistent with the framing of equality of opportunity (EO) --- a philosophical doctrine that aims to remove morally irrelevant barriers to the attainment of desirable positions.  
Different conceptions of EO differ in the domains to which they apply~\cite{Fishkin2014Bottlenecks} and in how they model the relationship between morally relevant and irrelevant factors. We take the point of view of substantive and, specifically, of  luck-egalitarian EO~\cite{dworkin_1981, Roemer2002}, treating recourse over time as a series of contests to build qualifications and attain desirable positions.  
%

\cite{Roemer2002} proposed to consider people with the same morally-arbitrary circumstance as being of the same ``type.'' In the recourse setting, an individual's type informs \emph{both} their initial qualification score and the effort they exert when acting upon recourse.  Roemer posits that, since qualification and effort distributions are characteristics of the type, and since type membership is morally arbitrary, one should only compare individuals to others of the same type. Figure~\ref{fig:mindmap} summarizes the normative views that underlie the formalization of fairness in recourse in this paper. 
See~\cite{DBLP:conf/eaamo/KhanMS22} for additional background on EO, and for details about the normative grounding of algorithmic fairness objectives. 

\begin{figure}[t]
    \centering
    \includegraphics[width=0.95\linewidth]{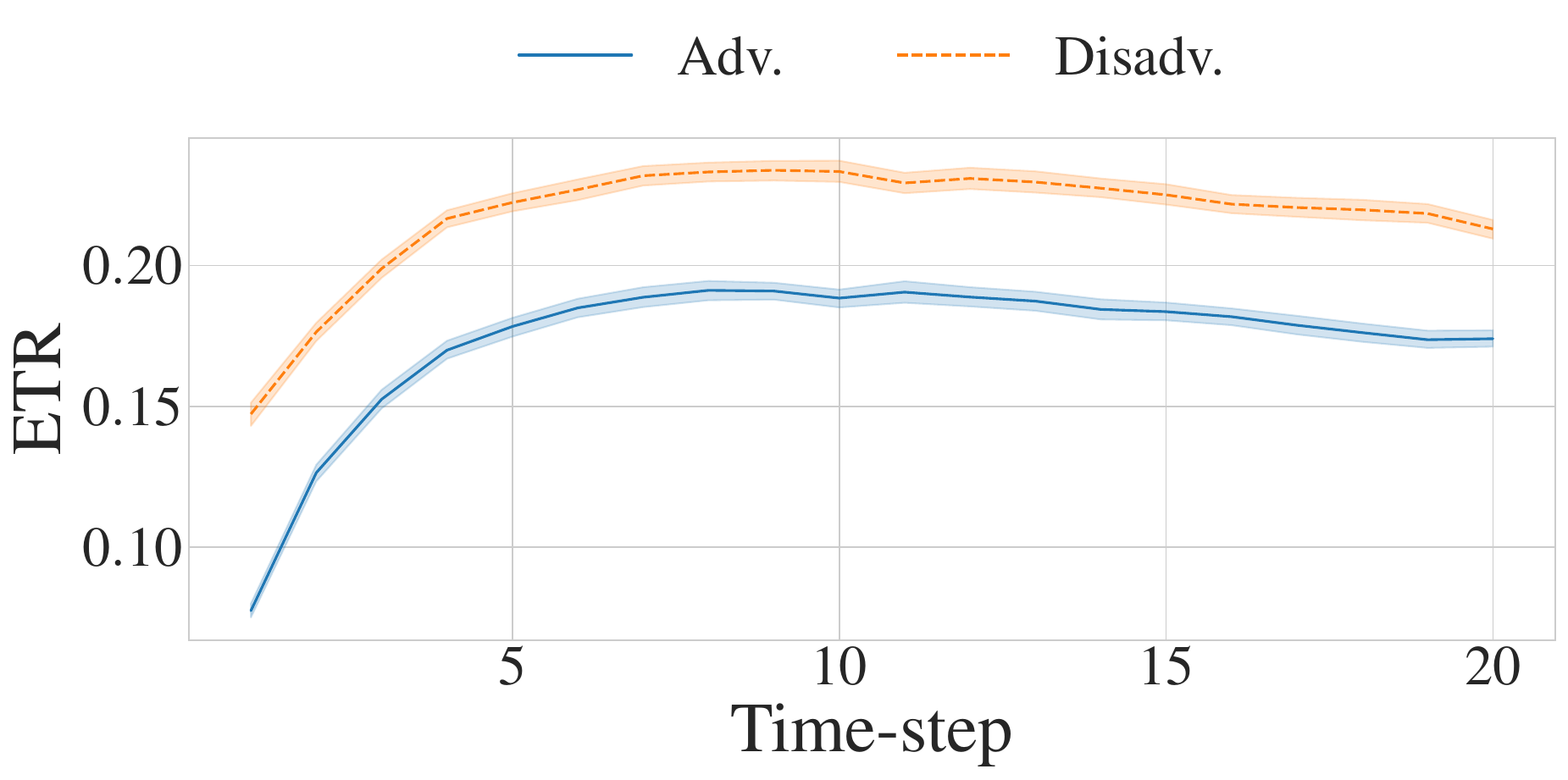}
    \caption{The \etr, or the total amount of effort exerted per successful recourse event, by advantaged and disadvantaged groups (in terms of initial circumstances) over 20 time-steps.} 
    \label{fig:motivation}
     \vspace{-4mm}
\end{figure}
\paragraph*{Contributions.}

The first contribution of this paper is introducing two novel metrics for evaluating the fairness of recourse. The first, \etr, quantifies the amount of effort it takes for individuals in one population to achieve recourse, compared to individuals in another population. This metric is illustrated in Figure~\ref{fig:motivation}, which shows that across all time-steps, a disadvantaged population exerts noticeably more effort per successful recourse event than an advantaged one. The second metric, \ttr, quantifies how long it takes for individuals in one population to achieve recourse, compared to individuals in another population.


The second contribution is extensive empirical analysis of the interplay between initial qualifications and effort, conducted with the help of a flexible agent-based simulation framework.  This analysis surfaces unfairness, in the sense of disparities in \etr and \ttr across population groups.

The third contribution is an intuitive fairness intervention, called \fairselect, that is consistent with the normative grounding of substantive EO.  We report results of an empirical analysis, comparing the behavior of our method to others, and showing that it can successfully mitigates disparities.

\section{Background}
\label{sec:related}

\paragraph{Fairness in recourse.}  There is an emerging body of work on fairness in algorithmic recourse. Notably, there is a \emph{disconnect between classifier fairness and fair recourse}---even with a fair classifier, no guarantees can be made about the fairness of recourse for that classifier. 

Figure~\ref{fig:data_generation} shows three different scenarios where decision-making may be fair with respect to the commonly used \emph{Demographic Parity} metric, but that results in a scenario where negatively classified instances from a disadvantaged population are much further away from the decision boundary than positively classified instances. As a result, members of the disadvantaged group would have to change their score by a much larger value to achieve recourse, which in practice means they would have to put in much more real-world effort. While there may be certain settings where this difference is appropriate, there are also many settings where it is undesirable (see the \textbf{Motivating example}).

This issue was studied by~\cite{gupta2019equalizing} who suggested measuring bias in recourse as the \emph{group-wise distance to the decision boundary}, and proposed a method for mitigating this bias, which we explore later in this paper. Recently, \cite{von2022fairness} added a causal lens to~\cite{gupta2019equalizing}, and defined two new metrics: one for individual-level and one for group-level fairness. These metrics are based on the \emph{cost} of recourse, \ie how much \emph{effort} individuals must exert to change their outcome.

Equalizing the distance to the decision boundary for recourse recommendations across groups may be insufficient because the ability to act on recourse recommendations may be different across groups, as noted by~\cite{ustun2019actionable}. In other words, the recourse mechanism \emph{itself} might be unfair. Suppose that individuals are described by two features, $X_1$ and $X_2$, but that $X_1$ is more difficult to change than $X_2$. Unfairness can arise if individuals from the disadvantaged group are more often being recommended to change $X_1$. This issue can be mitigated, at least in part, by creating similar group-wise counterfactual explanations, as in~\cite{artelt2022explain}.~\cite{sharma2019certifai} also explored differences in the cost---described as the \emph{burden}---of recourse between groups.

Our methods differ from prior work in our explicit modeling of qualification and effort, as well as an explicit focus on fairness and recourse dynamics over time---an oft-overlooked element in work on recourse, discussed next.

\paragraph{Robustness and time in recourse.} Too often literature in algorithmic recourse assumes a stable setting over time, which is unlikely in the real-world~\cite{barocas2020hidden}.  In light of this, there has been study in the \emph{robustness of recourse}, or the idea that recourse generated from a classifier should be stable to perturbations in the setting~\cite{guyomard2023generating,konig2021causal,mishra2021survey,sharma2019certifai,ehyaei2023robustness}.

\cite{ferrario2022robustness} studied the impact of retraining models on the validity of counterfactual explanations over time. They developed a method using counterfactual data augmentation to improve robustness. ~\cite{pawelczyk2022probabilistically} examined cases when an individual acts on recourse but does not reach the cut-off and so does not receive the positive outcome. Others have considered the competitive effects on recourse that result from individuals vying for scarce resources~\cite{altmeyer2023endogenous,o2021multi, o2022toward,rawal2020beyond,upadhyay2021towards}.

\paragraph*{Modeling multi-agent recourse over time.} In this work, we build on an approach for modeling multi-agent recourse over time by ~\cite{DBLP:conf/eaamo/FonsecaBABS23}. Consider a population of agents $P$. Each $x \in P$ is described by a set of features $x \in \mathcal{X}$, and who will be evaluated for a desired outcome by a machine learning classifier or ranker $f: \mathcal{X} \rightarrow [0,1]$ (\eg applying for an academic program). Agents are competing for a limited number of positive outcomes $k$ (\eg the number of spots available in the program) over a series of timesteps $t = \{0,1,...,T\}$. An agent may change its features over time, and we use $x_t$ to refer to the state of $x$ at time $t$.

At each timestep, a score $f(x_t)$ is calculated for each agent. This score is used to assign outcomes: positive when $f(x_t)\geq s_t$, where $s_t$ is the score cut-off at time $t$, and negative when $f(x_t)< s_t$.  Agents who receive a negative outcome also get
a \emph{recourse recommendation} $x' \in \mathcal{X}$ that satisfies two conditions: (1) $f(x') \geq s_t$ and (2) $x'$ is associated with the lowest cost $c(x,x')$ of making the change:
\begin{equation}
x' = \min_{x'} \quad c(x_t,x') \text{~~~~~s.t.} \quad  f(x') \geq s_t, x' \in \mathcal{X}
\label{eq:recourse}
\end{equation}

Agents who receive the positive outcome exit the simulation. A new set of agents $N_t$ enters the simulation at each time step. Figure~\ref{fig:simulation_visual} illustrates the dynamics of recourse from ~\cite{DBLP:conf/eaamo/FonsecaBABS23}.

\begin{figure}[]
    \centering
    \includegraphics[width=0.9\linewidth]{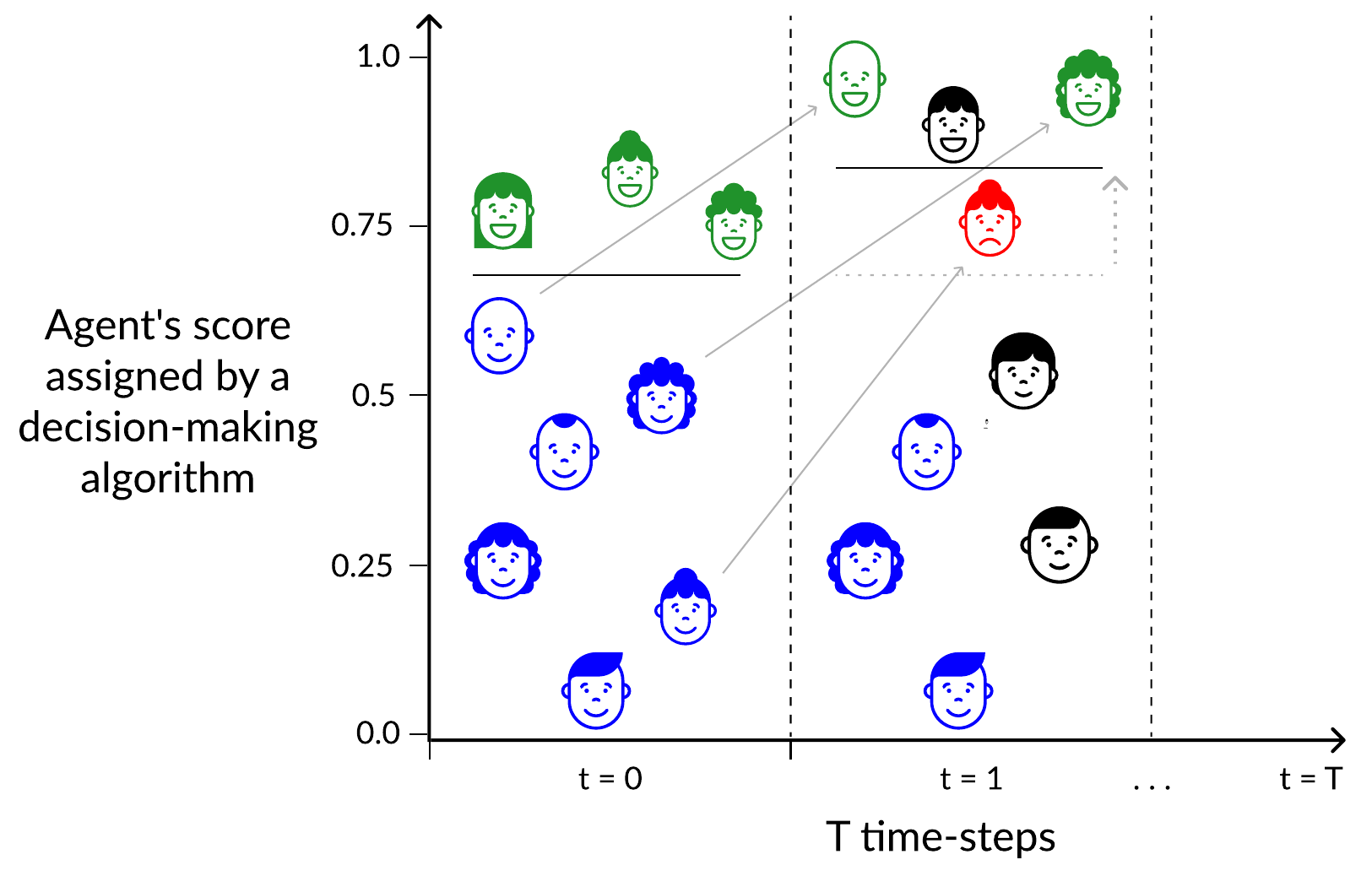}
    \caption{Reproduced with permission from~\cite{DBLP:conf/eaamo/FonsecaBABS23}.The x-axis shows time-steps $t$, and the y-axis shows agent scores  $f(x_t)$. In this example, there are $k = 3$ positive outcomes available at each time-step. At $t = 0$, green agents receive a positive outcome ($f(x_0) \geq s_0$, where $s_0$ is represented by the horizontal line), and blue agents receive a negative outcome along with a recommendation $x'$ on how to change their features to receive a positive outcome. At time $t = 1$, new agents $N_1$, shown in black, enter the environment. Grey arrows show recourse actions. The agent shown in red acted on the recourse recommendation as directed, but (disappointingly) its effort turned out to be insufficient because competition from other agents ``raised the bar'' for acceptance.
     }
    \label{fig:simulation_visual}
   \vspace{-4mm}
\end{figure}

To model the likelihood an agent $x_t$ will act on a recourse recommendation $x'$, and the amount by which they change their features, a function $a_e: \mathcal{X} \times \mathcal{X} \rightarrow \mathcal{X}$ is used. The function uses the hyperparameter $e \in \mathbb{R}$ that represents an agent's individual willingness to take action, or what will refer to as \textbf{\emph{effort}}. We discuss our choices for $e$ in Section~\ref{subsec:pops}.

\section{Fairness in Recourse}
\label{sec:metrics}

\begin{figure*}
     \centering
     \begin{subfigure}[t]{0.33\textwidth}
         \centering
         \includegraphics[width=\textwidth]{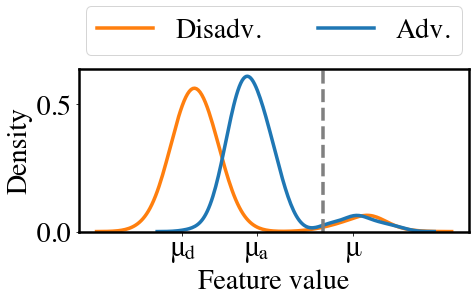}
         \caption{$\mu$ is the same for both groups; different means $\mu_a$ and $\mu_d$; equal variances}
     \end{subfigure}
     \begin{subfigure}[t]{0.33\textwidth}
         \centering
         \includegraphics[width=\textwidth]{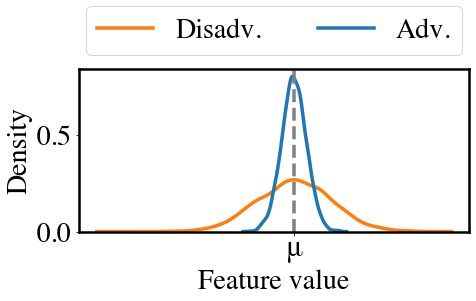}
         \caption{equal means, different variances}
     \end{subfigure}
     \begin{subfigure}[t]{0.33\textwidth}
         \centering
         \includegraphics[width=\textwidth]{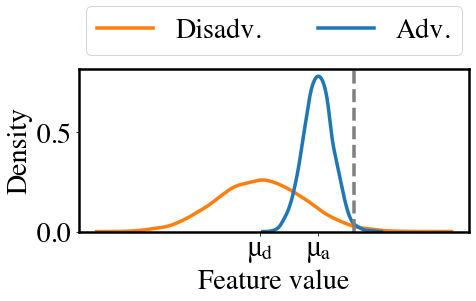}
         \caption{different means, different variances}
     \end{subfigure}

    \caption{Illustration of score distributions that can result in fair decision-making but unfair recourse. Let individuals to the right of the vertical dashed grey line receive a positive outcome; then in all cases, decision-making is fair with respect to \emph{Demographic Parity} between the advantaged (blue) and disadvantaged (orange) groups, but recourse is unfair. In our experiments, we assume that features are generated according to (a), where $\mu$ is the mean value for the high-performing agents from both populations, and $\mu_a$ and $\mu_d$ are the means of lower-performing agents of the advantaged and disadvantaged populations, respectively.
    }
    
    \label{fig:data_generation}
    \vspace{-4mm}         
\end{figure*}

\subsection{Populations, Initial Qualificaitons, and Effort}
\label{subsec:pops}
We assume that each individual belongs to one of several populations, one of which could map to a real-world protected group. In this paper, we restrict our attention to the case where agents are partitioned into two populations: an advantaged group $P^a$ and a disadvantaged group $P^d$, such that $P = P^a \cup P^d$ and $P^a \cap P^d = \emptyset$. In Figure~\ref{fig:mindmap}, this is shown under ``group structure,'' and corresponds to one binary protected attribute that denotes population membership. Importantly, each $P^a$ and $P^d$ have an associated \emph{initial qualification} and \emph{effort}, which we now discuss in turn.  

\emph{Initial qualification} refers to the initial circumstances of individuals. In our implementation, we use a qualification parameter $p$ that governs the way initial features values are drawn for individuals. Figure~\ref{fig:data_generation} shows different initial feature distributions for agents that can result in fair decision-making with respect to \emph{Demographic Parity}, but leads to \emph{unfair recourse}.   In this paper, we model case (a), where the features of advantaged $P^a$ and disadvantaged $P^d$ populations are drawn from a bimodal distribution, representing high-performing and lower-performing agents.  For both populations, features for high-performing agents are drawn from $\mathcal{N}(\mu,\sigma^2)$. For lower-performing agents, features are drawn from $\mathcal{N}(\mu_a,\sigma^2)$ and $\mathcal{N}(\mu_d,\sigma^2)$, respectively, where $\mu_d \leq \mu_a < \mu$. For simplicity, all distributions have the same standard deviation $\sigma$. We will control the disparity in initial qualifications between agents in $P^a$ and $P^d$ by controlling the distance between the means with parameter $q \in \mathbb{N}$, such that $\mu_a = \mu_d + q * \sigma$.

\emph{Normatively} (per Figure~\ref{fig:mindmap}), by introducing a disparity in initial qualifications between $P^a$ and $P^d$, we model pre-existing bias, per~\cite{DBLP:journals/tois/FriedmanN96}. The assumption that pre-existing bias is due to disparities in access to opportunities to build qualifications ahead of the competition (rather than to some ``innate'' ability) is consistent with the We are all Equal (WAE) worldview from~\cite{DBLP:journals/cacm/FriedlerSV21}.  Our choice to focus on the feature generation method from Figure~\ref{fig:data_generation}(a), where populations have similar performance distributions (equal variance) but many members of the disadvantaged group have lower initial qualifications (different means), is consistent with this normative framing.

\emph{Effort} refers to the improvement in an agent's score between rounds. In this paper, \emph{effort} is drawn from a folded normal distribution that corresponds to $\mathcal{N}(e,1)$. Therefore, the larger $e$, the higher the potential improvement. \emph{Normatively}, we make the choice to associate an effort distribution with each population, which is consistent with luck-egalitarian substantive EO (see Figure~\ref{fig:mindmap}).  To represent and control the potential disparity in effort between $P^a$ and $P^d$, we consider three cases:  (1) effort sampled from the same distribution for $P^a$ and $P^d$ (\emph{i.e.}, $e_a = e_d$), (2) effort of $P^a$ is in expectation double that of $P^d$ (\emph{i.e.}, $e_a > e_d$) and (3) effort of $P^a$ is in expectation half that of $P^d$ (\emph{i.e.}, $e_a < e_d$). 

\subsection{Metrics}

\paragraph*{Effort-to-recourse (ETR).}

This metric quantifies the amount of effort exerted by individuals who acted on recourse \emph{and subsequently received a positive outcome}, and compares this effort across populations.

Recall that $s_i$ is the threshold for the positive outcome at time $i$. As the first step, to calculate $ETR$ for population $P$ at time $t$, we identify the \emph{set of recourse-successful agents $R_t$} who had a negative outcome at some time $i<t$, acted on recourse, and then received a positive outcome at time $t$:

\begin{equation}
    D_t (P)= \{x \in P ~|~ f(x_t) \geq s_t \land \exists~i<t: f(x_i)< s_i\}
\end{equation}

Next, we define the \emph{effort of agent $x$} up to time $t$ as:

\begin{equation}\label{eq:cost}
C_t(x) = \sum_{\forall i < t} c(x_{i},x_{i+1})
\end{equation}

Using these two elements, we express \etr:
\begin{equation}\label{eq:etr}
    ETR_{t}(P) = \frac{1}{|D_t (P)|} \sum_{x \in D_t (P)} C_t(x)
\end{equation}

We can compute \etr for the overall population $P$ or separately for $P^a$ and $P^d$. Further, we can compute \etr at each time step, and use it to track effort over time, as illustrated in Figure~\ref{fig:motivation}. Finally, we can compute \etr values for sub-populations, and compare them to reason about fairness.  In this paper, we quantify \emph{effort-to-recourse disparity} as the ratio:

\begin{equation}
rETR_{t}(P^a, P^d) = \frac{ETR_{t}(P^d)}{ETR_{t}(P^a)}
\label{eq:etr_ratio}
\end{equation}

When $rETR_{t}$  is close to $1.0$, then agents from populations $P^a$ and $P^d$ exert similar effort per successful recourse event. However, when $rETR_{t}>1$, then the disadvantaged agents from $P^d$ exert more effort than their counterparts from $P_a$.  To reason about the magnitude of the disparity, we may follow the guideline used by practitioners in algorithmic fairness known as the ``80\%-rule,'' where per-group values should be within $80\%$ of each other, placing legally permissible disparity values within the $[0.8,1.2]$ range~\cite{bell2023possibility}. 

Finally, we note that the $rETR_t$ metric has a convenient mathematical property: its value is proportional to the amount of expected effort exerted by members of each population: $rETR_{t}(P^a, P^d) \propto \frac{e_d}{e_a}$. See formal statement and proof of this property in Proposition 1 in the Appendix.

\paragraph{Time-to-recourse (TTR).} This metric is defined analogously to $ETR$: it quantifies the average number of time steps per successful recourse event up to a given time $t$.  

For a recourse-successful agent $x \in R_t(P)$, let $\delta(x)$ denote the number of time-steps to achieve a positive outcome. More formally, we can write $o^{-}(x) = \min \{i\leq t: f(x_i) < s_i\}$ as the first time step when the agent received a negative outcome,
$o^{+}(x) = \{i\leq t: f(x_i) \geq s_i\}$ as the timestep when the agent received a positive outcome,
and, finally, $\delta(x) = o^{+}(x) - o^{-}(x)$.
We express \emph{time-to-recourse} as:

\begin{equation}
    TTR_{t}(P) = \frac{1}{|R_t (P)|} \sum_{x \in R_t (P)} \delta(x)
    \label{eq:ttr}
\end{equation}

We quantify \emph{time-to-recourse disparity} as the difference:

\begin{equation}
dTTR_{t}(P^a, P^d) = TTR_{t}(P^d) - TTR_{t}(P^a)
\label{eq:ttr_diff}
\end{equation}

Note that we use a difference instead of a ratio with $TTR$ because it is more comprehensible: $dTTR_{t} $ is the literal difference in the number of time-steps between sub-populations. A value of 0 means that there is no difference in the time to a positive outcome between groups. Positive values represent \emph{how much more time on average} it takes an agent from the disadvantaged population $P^d$ to achieve the positive outcome (negative values imply the opposite).

Finally, note that both $rETR_{t}$ and $dTTR_{t}$ only account for productive effort (resp. time), because they are computed for the set of recourse-successful agents $R_t$. In experiments that measured both productive and ``wasted effort'' (\ie the effort of individuals who \emph{never} received a positive outcome), we found that there was no difference in our findings.
\section{Mitigating Unfairness}
\label{sec:mitigiation}

Disparities in $ETR$ and $TTR$ can be mitigated by modifying the decision-making process. For instance, one could pre-suppose a fixed number (or proportion) of applicants of each population to accept. Through this type of strategy, individuals are ``competing'' within population rather then across populations.  Because both initial circumstance and effort are associated with a population, such \emph{circumstance-normalized selection} strategy can be considered fair under substantive EO.

Following this justification, we develop an intuitive selection intervention, based on \emph{rank-aware proportional representation}  by~\citet{yang2017measuring}.  It involves assigning positive outcomes to the highest-scoring individuals from each sub-population, proportionally by population size.\footnote{\citet{yang2017measuring} developed this approach for ranking, while our setting is \emph{set selection}, a special case of ranking.} When $k$ is even and sub-populations are of equal size, then an equal number of positive outcomes is assigned to individuals from each sub-population. We refer to this strategy as \fairselect (\cns for short).

Since the \cns approach is post-hoc (\ie after scoring and generating recourse recommendations), it has the advantage of being agnostic to both the method used to generate individual scores, and the recourse recommendation method. Note that, in our implementation, ties are broken by assigning the positive outcome to the agent who has been trying for recourse the longest, and arbitrarily after that.

\paragraph*{Benchmark strategy.} To better understand the effectiveness of our proposed method, \cns, in mitigating disparities in $ETR$ and $TTR$, we compare its performance to an existing pre-processing strategy for improving the robustness of recourse called \emph{Counterfactual Data Augmentation} (or \cda) by~\cite{ferrario2022robustness}. \cda augments the data with counterfactuals for individuals who received the negative outcome, and then re-trains the classifier on this data. By design, \cda  is most effective when re-training at each time-step.

\paragraph*{Other intervention strategies.} We also implemented the in-processing method by ~\citet{gupta2019equalizing}, who re-position the decision boundary of a classifier to be equidistant from negatively-classified individuals from different groups. As a result, differences in initial circumstances (although not in effort) are mitigated. We found that this method had an unintended negative consequence: in one experiment, the decision boundary slowly shifted its position so as to be completely inverse, meaning it encouraged individuals to \emph{worsen} their feature values to achieve recourse (illustrated in Figure~\ref{fig:boundary_shift_one}). For this reason, we do not include the method as a benchmark, but do present these results in Appendix Table~\ref{tab:suppl_results}.

As an alternative system-level intervention, schools or other institutions can design recourse for systems with \emph{effort explicitly in mind}. For example, returning school or loan applicants could be evaluated solely on the magnitude of improvement in their score. This intervention may also be useful in a setting where individuals are appealing social media bans, and need to show that they have ``learned from the consequences'' of their behavior.
\section{Experimental Evaluation}
\label{sec:experiments}

\begin{table*}[t]
\small
\centering
\caption{The effort-to-recourse disparity ($rETR$) and disparate time-to-recourse ($dTTR$) between an advantaged population $P_a$ (used as reference) and a disadvantaged population $P_d$ under different initial circumstances as described by the qualification $q$ and the 3 effort conditions defined in Section~\ref{subsec:pops}. Results are shown for a baseline (no mitigation), with \cns, and with \cda. Bold indicates the lowest (\ie best) $rETR$ or $dTTR$ for a setting. All disparities are calculated cumulatively from the initial time-step to $T$.}
\label{tab:results}

\begin{tabular}{cc|cccc|ccc}   
\textbf{} & \textbf{} & \multicolumn{4}{c|}{\textbf{$rETR$}} & \multicolumn{3}{c}{\textbf{$dTTR$}} \\
\textbf{} & \textbf{q} & \textbf{\begin{tabular}[c]{@{}c@{}}Baseline\end{tabular}} & \textbf{\begin{tabular}[c]{@{}c@{}}\cns\end{tabular}} & \textbf{\cda} & \textbf{\cns + \cda} & \textbf{\begin{tabular}[c]{@{}c@{}}Baseline\end{tabular}} & \textbf{\begin{tabular}[c]{@{}c@{}}\cns\end{tabular}} & \textbf{\cda} \\ \hline
 \parbox[t]{2mm}{\multirow{4}{*}{\rotatebox[origin=c]{90}{$e_a = e_d$}}} & 0 & 1.001 $\pm$ 0.003 & 1.006 $\pm$ 0.004 & 1.001 $\pm$ 0.001 & 0.999 $\pm$ 0.001 & -0.043 $\pm$ 0.022 & 0.006 $\pm$ 0.014 & -0.026 $\pm$ 0.021 \\
 & 1 & 1.213 $\pm$ 0.003 & 1.066 $\pm$ 0.004 & 1.079 $\pm$ 0.001 & \textbf{1.044 $\pm$ 0.002} & 0.646 $\pm$ 0.023 & \textbf{0.021 $\pm$ 0.013} & 1.012 $\pm$ 0.021 \\
 & 2 & 1.429 $\pm$ 0.004 & 1.133 $\pm$ 0.004 & 1.158 $\pm$ 0.002 & \textbf{1.020 $\pm$ 0.005} & 1.177 $\pm$ 0.024 & \textbf{0.022 $\pm$ 0.013} & 1.877 $\pm$ 0.022 \\
 & 3 & 1.640 $\pm$ 0.006 & 1.207 $\pm$ 0.005 & 1.237 $\pm$ 0.002 & \textbf{1.063 $\pm$ 0.004} & 1.573 $\pm$ 0.024 & \textbf{0.038 $\pm$ 0.013} & 2.477 $\pm$ 0.022 \\ \hline
\parbox[t]{2mm}{\multirow{4}{*}{\rotatebox[origin=c]{90}{$e_a > e_d$}}} & 0 & 0.942 $\pm$ 0.002 & 0.737 $\pm$ 0.002 & 0.994 $\pm$ 0.001 & 0.940 $\pm$ 0.001 & 1.315 $\pm$ 0.021 & 0.100 $\pm$ 0.012 & 1.290 $\pm$ 0.021 \\
 & 1 & 1.109 $\pm$ 0.003 & 0.805 $\pm$ 0.002 & \textbf{1.049 $\pm$ 0.001} & 0.916 $\pm$ 0.004 & 1.835 $\pm$ 0.023 & \textbf{0.084 $\pm$ 0.012} & 2.172 $\pm$ 0.021 \\
 & 2 & 1.275 $\pm$ 0.003 & 0.886 $\pm$ 0.002 & 1.108 $\pm$ 0.001 & \textbf{0.901 $\pm$ 0.005} & 2.257 $\pm$ 0.023 & \textbf{0.050 $\pm$ 0.015} & 2.851 $\pm$ 0.022 \\
 & 3 & 1.429 $\pm$ 0.004 & 0.975 $\pm$ 0.003 & 1.170 $\pm$ 0.002 & \textbf{0.951 $\pm$ 0.003} & 2.532 $\pm$ 0.023 & \textbf{-0.029 $\pm$ 0.015} & 3.295 $\pm$ 0.025 \\ \hline
\parbox[t]{2mm}{\multirow{4}{*}{\rotatebox[origin=c]{90}{$e_a < e_d$}}} & 0 & 1.064 $\pm$ 0.002 & 1.366 $\pm$ 0.004 & 1.007 $\pm$ 0.001 & 1.064 $\pm$ 0.002 & -1.338 $\pm$ 0.021 & -0.101 $\pm$ 0.014 & -1.292 $\pm$ 0.018 \\
 & 1 & 1.252 $\pm$ 0.003 & 1.477 $\pm$ 0.005 & \textbf{1.064 $\pm$ 0.001} & 1.090 $\pm$ 0.001 & -0.845 $\pm$ 0.021 & \textbf{-0.085 $\pm$ 0.014} & -0.426 $\pm$ 0.019 \\
 & 2 & 1.433 $\pm$ 0.004 & 1.576 $\pm$ 0.006 & \textbf{1.123 $\pm$ 0.002} & 1.127 $\pm$ 0.002 & -0.446 $\pm$ 0.024 & \textbf{-0.065 $\pm$ 0.014} & 0.335 $\pm$ 0.019 \\
 & 3 & 1.605 $\pm$ 0.005 & 1.672 $\pm$ 0.006 & 1.190 $\pm$ 0.003 & \textbf{1.175 $\pm$ 0.003} & -0.168 $\pm$ 0.024 & \textbf{-0.047 $\pm$ 0.015} & 0.925 $\pm$ 0.022
\end{tabular}

\end{table*}

We ran each agent-based simulation over $T=20$ time-steps. To generate the score for each agent, we use a simple ranker $f(x) = 0.5 \cross x_1 + 0.5 \cross x_2$, where $x_1$ and $x_2$ are generated features.  Each reported metric represents the average value over 100 simulation runs, with different random seeds. In each simulation, there were 1,000 initial agents, 100 positive outcomes available ($k$), and 100 new agents at each time-step. Agents adapted according to the \emph{continuous, flexible} setting, see~\citet{DBLP:conf/eaamo/FonsecaBABS23} for details.

Our main experimental results are reported in Table~\ref{tab:results}. The table shows the \etr disparity ratio ($rETR$) and the \ttr difference, measured in time-steps, ($dTTR$) between an advantaged and disadvantaged populations ($P^a$ and $P^d$, respectively) under 4 different initial qualification conditions $q$, and under 3 different effort conditions $e$. Recall from Section~\ref{subsec:pops} (and Figure~\ref{fig:data_generation}(a)) that $q$ controls the disparity in initial qualifications between agents in $P^a$ and $P^d$: the distance between the means of the lower-performing agents is $\mu_a = \mu_d + q * \sigma$.

Recall also that, when the means of the folded normal effort distributions $e_a$ and $e_d$ are equal, then both populations exert comparable effort; $e_a > e_d$ denotes that the expected value of effort for agents in $P^a$ is twice that of $P^d$, and vice-versa for $e_a < e_d$. We report results for no mitigation (baseline), \cns (our mitigation), and \cda (benchmark mitigation, due to~\cite{ferrario2022robustness}).

\emph{Importantly, the ranker in our experiments results in fair selection w.r.t the commonly used Demographic Parity (DP) metric}: DP is close to $1.0$ for all values of $q$,  implying that our selection approach would colloquially be considered fair.

\paragraph{The effect of differences in initial qualification and effort.} When $q=0$ (\ie there is no difference in the initial qualifications between groups) $rETR$ is close to 1.0 under all effort conditions, and $dTTR$ is small. As expected, when $q$ increases, so does unfairness in $rETR$ and $dTTR$. Recall when $rETR$ is outside of the range $[0.8,1.2]$, we consider it unfair, and $rETR$ values over $1.0$ indicate how many times more effort $P^d$ exerts per successful recourse-event than $P^a$.

Disparities also grow based on the effort conditions. Fixing $q$, disparities are lowest when the $P^d$ exerts more effort, in the middle when expected effort between populations is equal, and highest when $P^a$ has higher expected effort.
The conditions with the highest observed disparities are predictably when $q=3$ and when $e_a > e_d$ ($P^a$ has better circumstances and has a higher expected effort). In this case, without mitigation, the effort per successful recourse event exerted by members of the disadvantaged population is 1.429 $\pm$ 0.004 times more than the advantaged population, and it takes them on average 2.532 $\pm$ 0.023 more time-steps to achieve recourse.

\paragraph{How much effort is enough?} Recall that the condition $e_a < e_d$ in Table 1 means that the disadvantaged population is, in expectation, doing twice as much ``hard work'' as the advantaged population. As expected, under this setting, agents in the disadvantaged group are putting in more effort, regardless of $q$---but importantly, this increased effort does not result in significant gains in $dTTR$. When $q = 1$ and $q=2$, agents in $P^d$ exert 1.433 $\pm$ 0.004 and 1.605 $\pm$ 0.005 times more effort, respectively, than agents in $P^a$, yet, they achieve those successful recourse events at nearly the same time as agents in $P^a$ (within a half time-step). This occurs because agents in $P^d$ \emph{will always require more time to overcome initial qualification differences}, and, as a result, are continually subject to competitive effects both between agents in the environment and with new agents entering at each time-step. Taken together, these results show that \emph{effort alone may not be enough to overcome disparities in initial circumstances across populations}. 

\paragraph{Mitigating disparities in ETR and TTR.} When $e_a = e_b$ and $e_a > e_b$, \cns is very effective at mitigating unfairness. In all cases, $rETR$ is close to $1.0$ and within tolerable ranges, and in many cases $dTTR$ is slightly negative, indicating that the disadvantaged population is taking \emph{slightly less} time per successful recourse event. When $e_a < e_d$, \cns is slightly less effective at mitigating disparities in $ETR$---but still shows improvement over the baseline unmitigated case. Importantly, $dTTR$ is significantly lower in all cases.

\cns outperforms \cda at mitigating disparities in $dTTR$ in all settings. Notably, with respect to $rETR$, \cns performs similarly to the benchmark, and results vary by setting. \cda seems to be better at mitigating disparities when  $e_a < e_d$.

Notably, the benchmark strategy (\cda) is generally ineffective at mitigating disparities in $dTTR$, and is in many cases \emph{worse for individuals as compared to when no mitigation strategy is implemented at all}. While exploring what causes this is beyond the scope of this paper, we hypothesize that \cda is able to mitigate disparities in effort because it moves the decision boundary closer to the negative class. This results in less effort being required to cross the decision boundary, which has the downstream effect of more agents crossing the decision boundary, thereby increasing the competitiveness of the environment.

Our results show that the choice of mitigation strategy depends both on the fairness metric of interest ($rETR$ or $dTTR$) and on the setting (\ie the relationship between $e_a$ and $e_d$). Note that, since \cda is a pre- and in-processing intervention, and \cns is a post-processing intervention, they can be combined to create a strategy that works better than either of them individually. To this end, we ran the same experiment described in this section combining \emph{both} \cns and \cda. As expected, this combination of methods is highly effective at mitigating disparities on both $ETR$ and $TTR$, and is our recommended approach to mitigating unfairness in recourse. \cns + \cda is either the top performer for $rETRA$, or is within 0.05 of \cda.  For $dTTR$ (see Table~\ref{tab:suppl_results} in the Appendix) \cda and \cda + \cns are equally effective.

\begin{figure}[t!]
    \centering
    \includegraphics[width=0.45\textwidth]{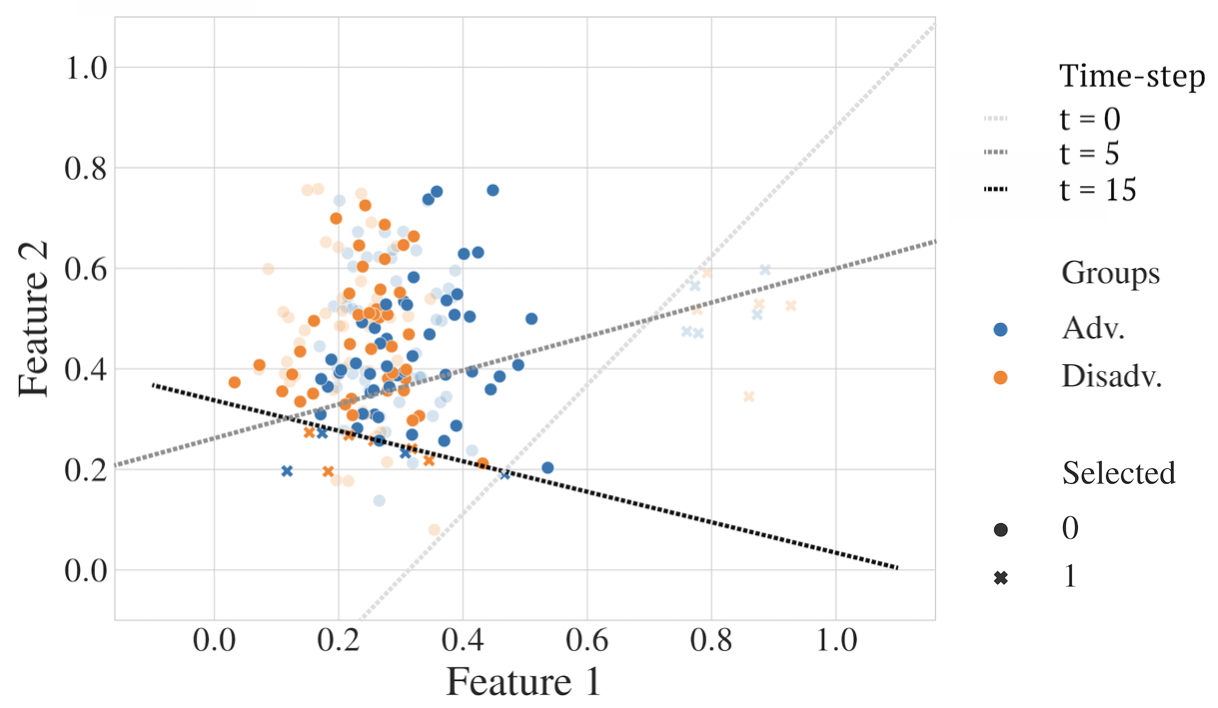}


     \caption{The figure shows agents' feature values and the decision boundary for a positive outcome (the dashed lines) for 3 time-steps from a single run of the simulation, using the in-processing Group Recourse Regularization method by ~\cite{gupta2019equalizing}. Note that faded-color points show agents' feature values at $t=0$, while full-color points show feature values at $t=15$.}
    \label{fig:boundary_shift_one}
    
\end{figure}
\section{Discussion}
\label{sec:discussion}

{\bf The impact of time.} This work is part of a growing body of research showing that it is critical to account for \emph{time-related effects} in algorithmic recourse (\cite{ferrario2022robustness,DBLP:conf/eaamo/FonsecaBABS23}). While disparities in the amount of effort exerted per successful recourse event are causes for concern in-and-of themselves, they become particularly salient when considered in the context of time. In our experiments, we observed that it is  more difficult for disadvantaged populations to successfully act on recourse---\emph{taking more time} and \emph{more repeated effort}---even under a classifier that is fair. This is problematic because it means that disadvantaged populations will be affected disproportionally by time-related competitive effects, and  data and model drift (\cite{altmeyer2023endogenous}). In essence, we find yet another manifestation of the ``poor get poorer''---or, in this context, the ``marginalized become more marginalized''--- showing that disparities in initial qualifications may be exacerbated over time.

We offer the following three guiding questions to practitioners designing fair algorithmic systems: (1) are recourse recommendations being generated fairly across groups (\eg is one population receiving easier recommendations)? (2) are populations exerting similar effort per successful recourse event (\eg is there a disparity on effort-to-recourse)? and (3) how problematic are time-related effects in this context, especially for individuals who are taking longer to act on a recourse recommendation and a re-enter the competition?

Further, unless practitioners are only concerned with mitigating disparities on $TTR$, we recommend that they take a combined approach and use both \cns and \cda. If the only metric of concern is $TTR$, it is likely that \cns is sufficient. If the metric of concern is $ETR$, the correct mitigation strategy is less obvious, and may depend on the way agents are exerting effort over time.

\paragraph{Trade-off between effort-to-recourse and overall utility.} The effort-to-recourse fairness metric is based on the world-view that \emph{effort towards a goal} is at times more important than meeting a \emph{fixed threshold for that goal}. This means that when mitigating bias in effort-to-recourse, one must make decisions about positive outcomes not just based on an individuals' score, but also on the hard-work and effort they have made to earn that score. Importantly, this implies that at times individuals with ``sub-optimal'' scores relative to a known goal threshold are selected for a positive outcome, resulting in a trade-off between mitigating effort-to-recourse and the overall utility. We do not explore this trade-off for reasons of scope, and leave it for future work.

\section{Conclusion, Limitations, and Social Impact}
\label{sec:conclusion}

In this paper, we propose new notions of fairness in algorithmic recourse that practitioners can add to their existing toolbox of metrics. Importantly, we take the normative worldview of substantive equality of opportunity, which is was not represented in the recourse literature. We show the use of these metrics in an agent-based simulation.  We also present an intervention for mitigating found disparities, which performs better than an existing benchmark strategy, and where a combination of both strategies does best. We hope that our work will be used to design fairer systems for recourse.

\paragraph{Main limitation: lack of real-world examples and data for recourse research.} The entirety of well-cited work in algorithmic recourse relies on popular open and archival datasets that were not created with algorithmic recourse as their primary purpose, \ie Adult (\cite{misc_adult_2}), German Credit (\cite{german_credit}), and COMPAS (\cite{angwin2022machine}), or use synthetic (or semi-synthetic data). This limitation has been noted by others (\citet{amarasinghe2023explainable, ferrario2022robustness}). 

To our knowledge, there are no public real-world algorithmic recourse datasets. In fact, even the near-universally accepted example of ``recourse when applying for a bank loan`` does not seem to exist outside of literature. The closest real-world example may be the use of \emph{Adverse Action Codes} (or \emph{Reason Codes}\footnote{\url{https://reasoncode.org/}}), per the U.S. Fair Credit Reporting Act, are given to individuals whose applications for credit are denied. Notably, these codes are non-specific. For example, \emph{Code 24} states as a reason for denial: ``too many bankcards with high balance compared to credit limit.'' Contrary to assumptions made in algorithmic recourse literature, it does not specify \emph{how many} bankcards is ``too many,'' and \emph{how much} higher the applicants' balance is compared to their credit limit.

While we believe there is still meaningful value in this work (and in the great work by other authors in this area), we see this limitation as highly problematic for the field and make a call-to-action for finding and deploying better data sets and use-cases for algorithmic recourse.

\newpage 
\bibliographystyle{named}
\bibliography{ijcai24}
\newpage 
\appendix
\section{Supplementary Materials}

\subsection{Additional background}

Below is the algorithm~\citet{DBLP:conf/eaamo/FonsecaBABS23} use for simulating mutli-agent recourse over time. It is valuable background for understanding the mechanics of the simulation.

\begin{algorithm}[]
    \caption{Simulating multi-agent recourse over time}
    \label{alg:recourse_over_time}
    \begin{algorithmic}[1]
    \Require Populations of agents $P^a$ and $P^d$; classifier $f:\mathcal{X} \rightarrow [0,1]$; function for generating recourse recommendations $r:\mathcal{X} \rightarrow \mathcal{X}$; function describing agent behavior $a_{e}:\mathcal{X} \times \mathcal{X} \rightarrow \mathcal{X}$;  number of time steps $T$, number of favorable outcomes per time-step $k$, number of new agents at each time-step $|N_t|$
    \Ensure Sequence of sets of agents $\{P_{0}, \dots, P_{T}\}$ in their final state for each time step.
    \State $t \gets 0$
    \State $P_0 \gets P^a_0 \cup P^d_0$
    \While{$t<T$}
        \If{$t \neq 0$}
            \State $N_t \gets$ create a new set of agents that are joining the simulation (half of these agents are in $P^a$ and half are in $P^d$);
            \State $P_t \gets A_t \cup N_t$, union of sets to create full population; 
        \EndIf
        
        \State $S_t \gets \{f(x_t) | \forall x_t \in P_t, f(x_t)\}$, compute a score for each agent; 
        \State $\bar{P}_t \gets sort(P_t,S_t)$, sort agents by score; 
        \State $P^k_t \gets select\_top\_k(\bar{P}_t)$, select the top-$k$ highest scoring agents, and assign them a positive outcome (assign the rest of the agents a negative outcome);
        \State $s_t \gets update(P^k_t)$, update the decision threshold to be the minimum score from $P^k_t$; 
        \State $L_t \gets P_t \setminus P^k_t$, select the agents who recieved a negative outcome;
        \State $R_t \gets \{x'| \forall x \in L_t, r(x)=x'\}$, generate recourse recommendations; 
        \State $A_{t+1} \gets \{x_{new}| \forall x \in L_t, a_{e_x}(x,x') = x_{new}\}$, for each agent, determine what action they take (this includes taking no action) and prepare to move to them the next time-step
        \State $t \gets t + 1$, increase time-step
    \EndWhile
    \end{algorithmic}
\end{algorithm}

\subsection{Additional details about metrics}

\begin{proposition}\label{prop:ineq}
Given populations $P^a$ and $P^d$ and:
\begin{enumerate}
    \item Initial qualifications: $\forall x \in P^a$, each feature $j$ is given by $x^{(j)} \sim N(\mu^{(j)}_{a} , \sigma^2)$, and $\forall x \in P^d$, each feature is given by $x^{(j)} \sim N(\mu^{(j)}_{d} , \sigma^2)$.
    \item Effort is an i.i.d. random variable with mean $e_a$ for $P^a$, and $e_d$ for $P^d$, respectively.
\end{enumerate}
The metric $rETR_t$, per Eq.~\ref{eq:etr_ratio}, is proportional to:

\begin{equation}
    \frac{ETR_{t}(P^d)}{ETR_{t}(P^a)} \propto \frac{e_d}{e_a}
\end{equation}
\end{proposition}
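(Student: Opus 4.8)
The plan is to unfold the definitions so that $ETR_t(P)$ becomes a product of an ``expected per-round effort'' factor and an ``expected number of recourse rounds for a successful agent'' factor, and then to show that only the first factor carries a dependence on the effort parameter that survives in the ratio. Concretely, combining Eq.~\ref{eq:cost} and Eq.~\ref{eq:etr} gives $ETR_t(P)=\frac{1}{|D_t(P)|}\sum_{x\in D_t(P)}\sum_{i<t}c(x_i,x_{i+1})$. Under the model of Section~\ref{subsec:pops}, in each round in which an agent actually acts on recourse the induced score change is, up to the linear ranker $f$, a folded-normal draw with underlying mean $e_P$; its expectation is $m(e_P):=e_P\,\mathrm{erf}(e_P/\sqrt{2})+\sqrt{2/\pi}\,e^{-e_P^2/2}$, which satisfies $m(e_P)=\Theta(e_P)$ with $m(e_P)/e_P\to 1$. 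This $\Theta(\cdot)$ behaviour, rather than exact equality, is the reason the statement reads ``$\propto$'' rather than ``$=$''.

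Next I would condition on $K_P(x)$, the number of rounds in which agent $x\in D_t(P)$ acts on recourse (the remaining summands $c(x_i,x_{i+1})$ vanish). By linearity and independence of the per-round effort draws, $\mathbb{E}[C_t(x)\mid K_P(x)]=K_P(x)\,m(e_P)$, hence $ETR_t(P)=m(e_P)\cdot\overline{K}_P$ where $\overline{K}_P:=\mathbb{E}[\,K_P(x)\mid x\in D_t(P)\,]$. Substituting into Eq.~\ref{eq:etr_ratio} yields $rETR_t(P^a,P^d)=\dfrac{m(e_d)}{m(e_a)}\cdot\dfrac{\overline{K}_d}{\overline{K}_a}$, so the task reduces to arguing that $\overline{K}_d/\overline{K}_a$ is, to leading order, independent of $e_a$ and $e_d$.

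The heart of the argument is this last step. A successful agent crosses once its accumulated improvement overtakes the moving threshold $s_t$; writing the per-round threshold drift induced by competition and by the influx $N_t$ as $\beta$ --- a quantity common to both populations --- a successful agent of population $P$ needs roughly $(\text{qualification gap})/(m(e_P)-\beta)$ rounds, while the event $\{x\in D_t(P)\}$ selects precisely the agents whose gap is small enough to be closed within the horizon $T$. Because $P^a$ and $P^d$ are modelled with equal-variance score distributions differing only in mean (Figure~\ref{fig:data_generation}(a)), conditioning on success reshapes the two gap distributions in the same way up to that shift, so $\overline{K}_d/\overline{K}_a$ is governed by $\beta$ and the common shape and is asymptotically free of the efforts; the effort dependence then lives solely in $m(e_P)\approx e_P$, giving $rETR_t(P^a,P^d)\propto e_d/e_a$. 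I expect this to be the main obstacle: making the cancellation of $\overline{K}_d/\overline{K}_a$ precise really needs either a steady-state assumption on $s_t$ or a restriction to the regime $e_P\gg\beta$ (where $K_P\approx\text{gap}/m(e_P)$, so $ETR_t(P)\approx\overline{\text{gap}}_{P,\,\text{succ}}$ with the conditional mean gap itself scaling linearly in $e_P$ through the horizon constraint), and a fully rigorous version would have to quantify the error terms hidden in these approximations.
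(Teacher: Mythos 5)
Your decomposition is at heart the same one the paper uses --- unfold Eq.~\ref{eq:etr} via Eq.~\ref{eq:cost}, identify the per-step cost with the realized effort draw, and reduce $ETR_t$ to an expected per-round effort times a count of rounds --- but you are considerably more careful about the second factor, and that is where the two arguments genuinely diverge. The paper's proof simply sums $c(x_i,x_{i+1})$ over \emph{all} $i<t$ for every recourse-successful agent, applies the law of large numbers round by round to conclude $ETR_t(P)\propto t\cdot e_P$, and lets the common factor $t$ cancel in the ratio. In your notation this amounts to asserting $\overline{K}_a=\overline{K}_d=t$: every successful agent contributes an i.i.d.\ effort draw in every one of the $t$ rounds, the number of acting rounds does not depend on the effort parameter, and conditioning on membership in $D_t$ introduces no selection bias on the effort draws. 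You correctly identify that this is the actual crux rather than a triviality: if instead $K_P\approx\text{gap}/m(e_P)$, the effort dependence of $\overline{K}_P$ would partially or wholly cancel the $m(e_P)$ in the numerator, and the stated proportionality would fail without a steady-state assumption on $s_t$ or a restriction to a regime such as $e_P\gg\beta$. Your route therefore buys an honest accounting of where the approximation lives (plus a sharper treatment of the folded-normal mean $m(e_P)$ versus the nominal parameter $e_P$), at the price of leaving the cancellation of $\overline{K}_d/\overline{K}_a$ as an admitted open step; the paper's route closes the proof in a few lines by assuming that step away. Neither argument is fully rigorous, but yours makes the missing hypothesis explicit rather than implicit, which is the more informative version of the claim.
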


\begin{proof}
Given the definition of $ETR$ in \ref{eq:etr} and the cost formula \ref{eq:cost}, the $ETR$ for $P^a_t$ is:
$$
ETR_{P^a_t} = \frac{1}{|\bar P^a_t|} \sum_{x \in \bar P^a_t} C_t(x) = \frac{1}{|\bar P^a_t|} \sum_{x \in \bar P^a_t} \sum_{\forall i < t} c(x_{i},x_{i+1}).
$$
Given that $ETR$ is calculated \textit{ex-post}, we can assume that the cost $c(x_{i},x_{i+1})$ in a timestep for any agent is equal to the exerted effort, that is:
$$
c(x_{i},x_{i+1}) = a(x_{i},x_{i+1}).
$$
Therefore, summing over time and for all agents the metric can be written as:
$$
ETR_{P^a_t} = \sum_{\forall i < t} \frac{1}{|\bar P^a_t|} \sum_{x \in \bar P^a_t} a(x_{i},x_{i+1}).
$$

Given assumption 3, the exerted effort for all the agents in a population $\forall x \in P^a$ is identical distributed and independent random variable with mean $e_a$. Therefore, for the law of the large numbers the sample average ${|\bar P^a_t|} \sum_{x \in \bar P^a_t} a(x_{i},x_{i+1})$ converges to the expected value $e_a$.

\begin{align*}
\frac{1}{|\bar P^a_t|} \sum_{x \in \bar P^a_t} a(x_{i},x_{i+1}) \propto
\frac{1}{|\bar P^a_t|} \sum_{x \in \bar P^a_t} e_a  = \frac{|\bar P^a_t|}{|\bar P^a_t|} e_a  = e_a  
\end{align*}
Finally, the $ETR$ for population $P^a$ can be simplified as:
$$
    ETR_{P^a_t} \propto \sum_{\forall i < t} e_a = t * e_a
$$
    
The ratio of the metric between the two population is proportional to:
$$
\frac{ETR_{P^a_t}}{ETR_{P^d_t}} \propto \frac{t * e_d}{t * e_d} = \frac{e_a}{e_d}
$$

\end{proof}

Finally, with similar consideration can be proof that the time-to-recourse $TTR$ metric is proportional to population-level descriptors such as the exerted effort $e$, the initial qualification parameters (i.e. $\mu$), and the number of individuals at each time.

\subsection{Additional details about experiments and their results}

In this section we have included additional experimental results that could not fit in the main body of the paper. Each table and figure has a detailed caption describing what is shown, but we summarizes each artifact here. Table~\ref{tab:suppl_results} shows extended results that could not be shown in Table~\ref{tab:results} from the main body of the paper. Figure~\ref{fig:boundary_shift} is the expanded version of Figure~\ref{fig:boundary_shift_one} from the main body of the paper. Figures~\ref{fig:suppl1},~\ref{fig:suppl2},~\ref{fig:suppl3} and~\ref{fig:suppl4} show the final \emph{distribution} of agents for various experiments.

\begin{table*}[]
\small
\centering
\caption{The effort-to-recourse disparity ($rETR)$ and disparate time-to-recourse ($dTTR$) between an advantaged population $P_a$ (used as the reference population) and a disadvantaged population $P_d$ under different initial circumstances as described by the qualification $q$ and 3 effort conditions: when both populations have the same distribution, when the expected effort of $P_a$ is twice that of $P_b$, and the reverse of the latter. Table (a) shows results without mitigation, (b) with group recourse regulraization~\cite{gupta2019equalizing}, and (c) with the combined fair selection and CDA strategy~\cite{ferrario2022robustness}. All disparities are calculated cumulatively from the initial time-step to $T$.\\ *\emph{Due to run-time constraints, the group recourse regularization method was only run over 10 different ranodm seeds.}}
\label{tab:suppl_results}

\begin{tabular}{cc|ccc|ccc}
\textbf{} & \textbf{} & \multicolumn{3}{c|}{\textbf{$rETR$}} & \multicolumn{3}{c}{\textbf{$dTTR$}} \\
\textbf{} & \textbf{q} & \textbf{Baseline} & \textbf{\begin{tabular}[c]{@{}c@{}}Group Recourse\\ Regularization*\end{tabular}} & \textbf{\begin{tabular}[c]{@{}c@{}}Combined\\ (Fair Selection\\ and CDA)\end{tabular}} & \textbf{Baseline} & \textbf{\begin{tabular}[c]{@{}c@{}}Group Recourse\\ Regularization*\end{tabular}} & \textbf{\begin{tabular}[c]{@{}c@{}}Combined\\ (Fair Selection\\ and CDA)\end{tabular}} \\ \hline
\multirow{4}{*}{$e_a = e_d$} & 0 & 1.001 $\pm$ 0.003 & 0.941 $\pm$ 0.024 & 0.999 $\pm$ 0.001 & -0.043 $\pm$ 0.022 & -0.271 $\pm$ 0.093 & 0.004 $\pm$ 0.012 \\
 & 1 & 1.213 $\pm$ 0.003 & 0.736 $\pm$ 0.150 & 1.044 $\pm$ 0.002 & 0.646 $\pm$ 0.023 & -0.391 $\pm$ 0.118 & 0.226 $\pm$ 0.015 \\
 & 2 & 1.429 $\pm$ 0.004 & 0.035 $\pm$ 0.224 & 1.020 $\pm$ 0.005 & 1.177 $\pm$ 0.024 & -0.309 $\pm$ 0.252 & 0.220 $\pm$ 0.017 \\
 & 3 & 1.640 $\pm$ 0.006 & 0.936 $\pm$ 1.429 & 1.063 $\pm$ 0.004 & 1.573 $\pm$ 0.024 & -0.397 $\pm$ 0.266 & 0.021 $\pm$ 0.014 \\ \hline
\multirow{4}{*}{$e_a > e_d$} & 0 & 0.942 $\pm$ 0.002 & 0.925 $\pm$ 0.025 & 0.940 $\pm$ 0.001 & 1.315 $\pm$ 0.021 & 0.726 $\pm$ 0.113 & 0.181 $\pm$ 0.014 \\
 & 1 & 1.109 $\pm$ 0.003 & 0.710 $\pm$ 0.063 & 0.916 $\pm$ 0.004 & 1.835 $\pm$ 0.023 & 0.255 $\pm$ 0.142 & 0.182 $\pm$ 0.012 \\
 & 2 & 1.275 $\pm$ 0.003 & 0.385 $\pm$ 0.090 & 0.901 $\pm$ 0.005 & 2.257 $\pm$ 0.023 & -0.118 $\pm$ 0.169 & 0.046 $\pm$ 0.015 \\
 & 3 & 1.429 $\pm$ 0.004 & -0.021 $\pm$ 0.195 & 0.951 $\pm$ 0.003 & 2.532 $\pm$ 0.023 & -0.027 $\pm$ 0.160 & -0.050 $\pm$ 0.013 \\ \hline
\multirow{4}{*}{$e_a < e_d$} & 0 & 1.064 $\pm$ 0.002 & 1.036 $\pm$ 0.027 & 1.064 $\pm$ 0.002 & -1.338 $\pm$ 0.021 & -1.329 $\pm$ 0.116 & -0.199 $\pm$ 0.013 \\
 & 1 & 1.252 $\pm$ 0.003 & 0.938 $\pm$ 0.332 & 1.090 $\pm$ 0.001 & -0.845 $\pm$ 0.021 & -1.264 $\pm$ 0.149 & -0.127 $\pm$ 0.014 \\
 & 2 & 1.433 $\pm$ 0.004 & 0.504 $\pm$ 0.332 & 1.127 $\pm$ 0.002 & -0.446 $\pm$ 0.024 & -0.652 $\pm$ 0.170 & -0.073 $\pm$ 0.016 \\
 & 3 & 1.605 $\pm$ 0.005 & -0.623 $\pm$ 0.696 & 1.175 $\pm$ 0.003 & -0.168 $\pm$ 0.024 & -0.490 $\pm$ 0.118 & 0.030 $\pm$ 0.019
\end{tabular}

\end{table*}

\begin{figure*}
     \centering
     \begin{subfigure}[t]{0.32\textwidth}
         \centering
         \includegraphics[width=\textwidth]{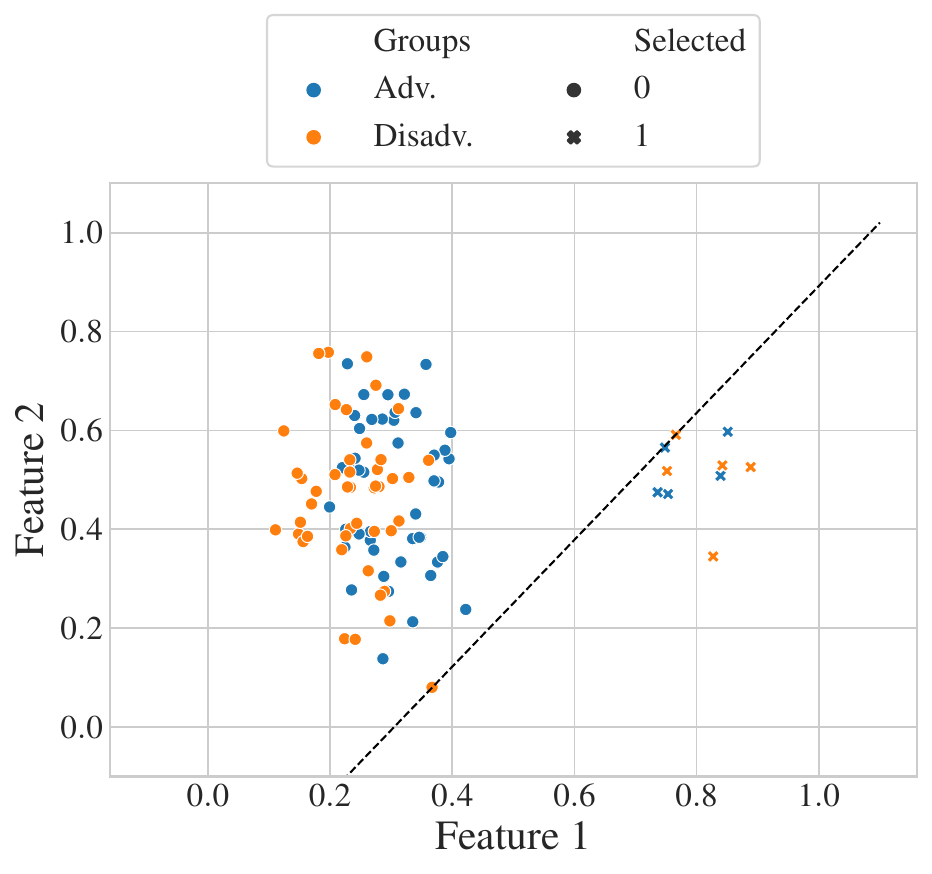}
         \caption{$t=0$}
     \end{subfigure}
     \begin{subfigure}[t]{0.32\textwidth}
         \centering
         \includegraphics[width=\textwidth]{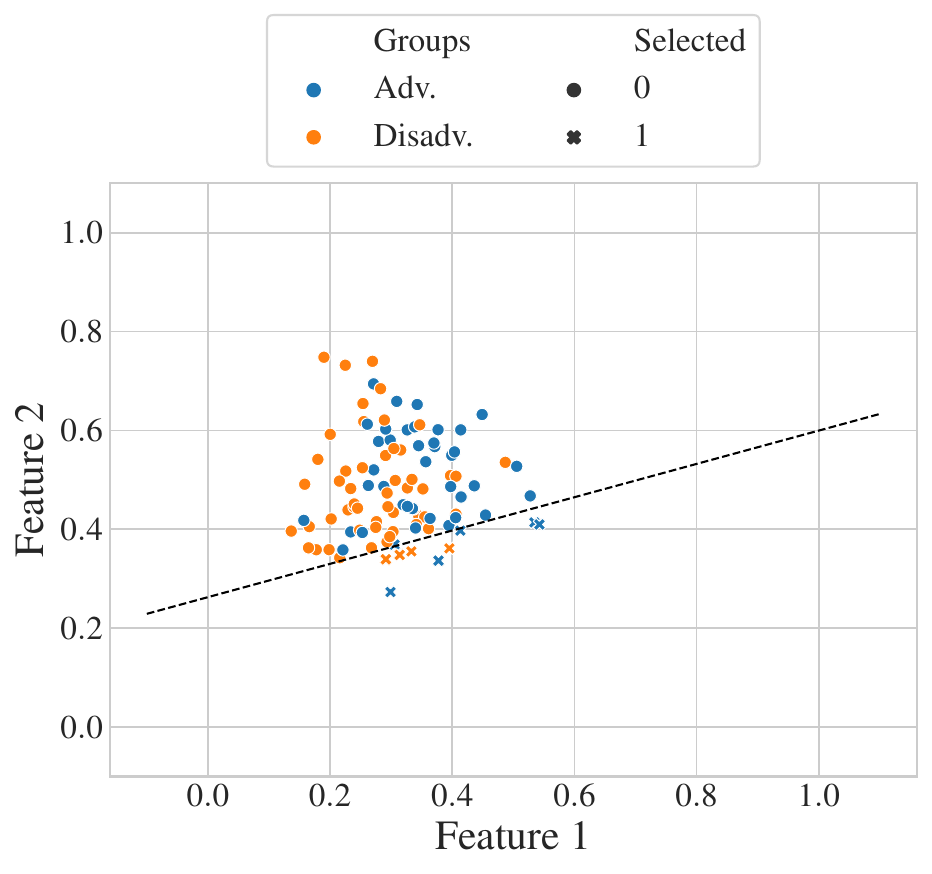}
         \caption{$t=5$}
     \end{subfigure}
     \begin{subfigure}[t]{0.32\textwidth}
         \centering
         \includegraphics[width=\textwidth]{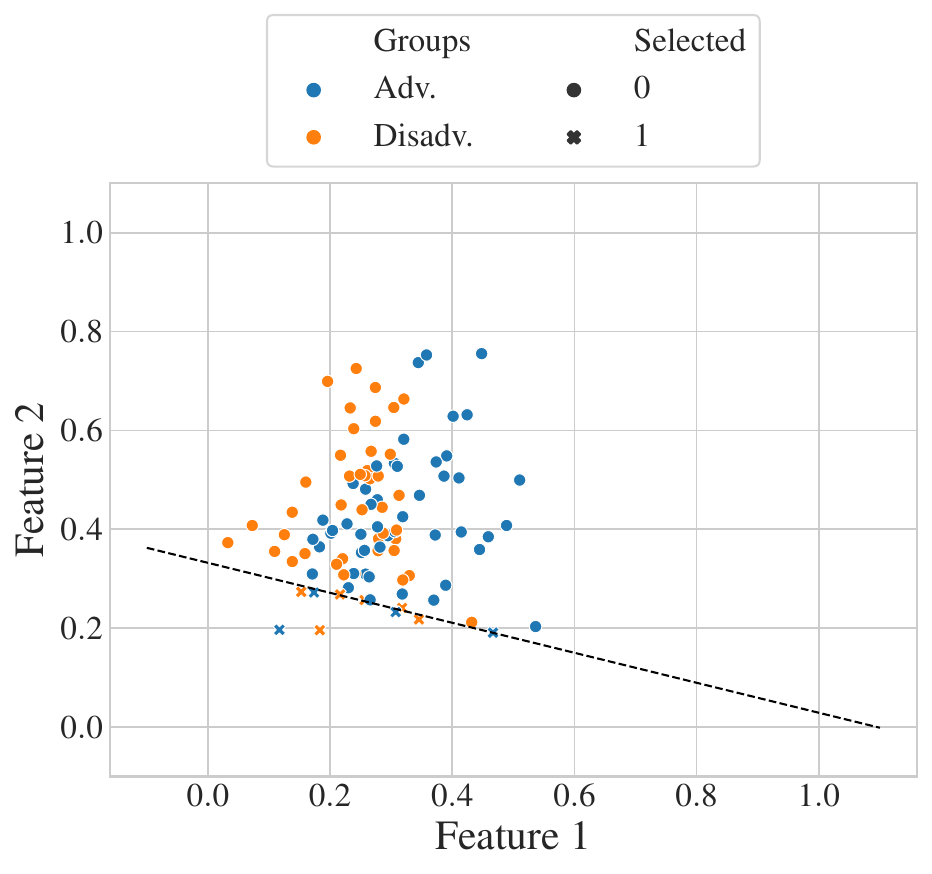}
         \caption{$t=15$}
     \end{subfigure}

     \caption{The figures shows agents' feature values and the decision boundary for a positive outcome (the dashed line) for 3 time-steps from a single run of the simulation using the in-processing Group Recourse Regularization method~\cite{gupta2019equalizing} to mitigate unfairness in recourse. An unintended consequence of the mitigation strategy is the dramatic movement of the decision boundary. At $t=0$, recourse implied that agents would need to increase Feature value 1 to cross the decision boundary for a positive outcome, but at $t=15$ recourse implies that both feature values should be decreased.}
     \label{fig:boundary_shift}
    
\end{figure*}

\newcommand{\rulesep}{\unskip\ \vrule\ }

\begin{figure*}
     \centering
     \begin{subfigure}[t]{0.24\textwidth}
         \centering
         \includegraphics[width=\textwidth]{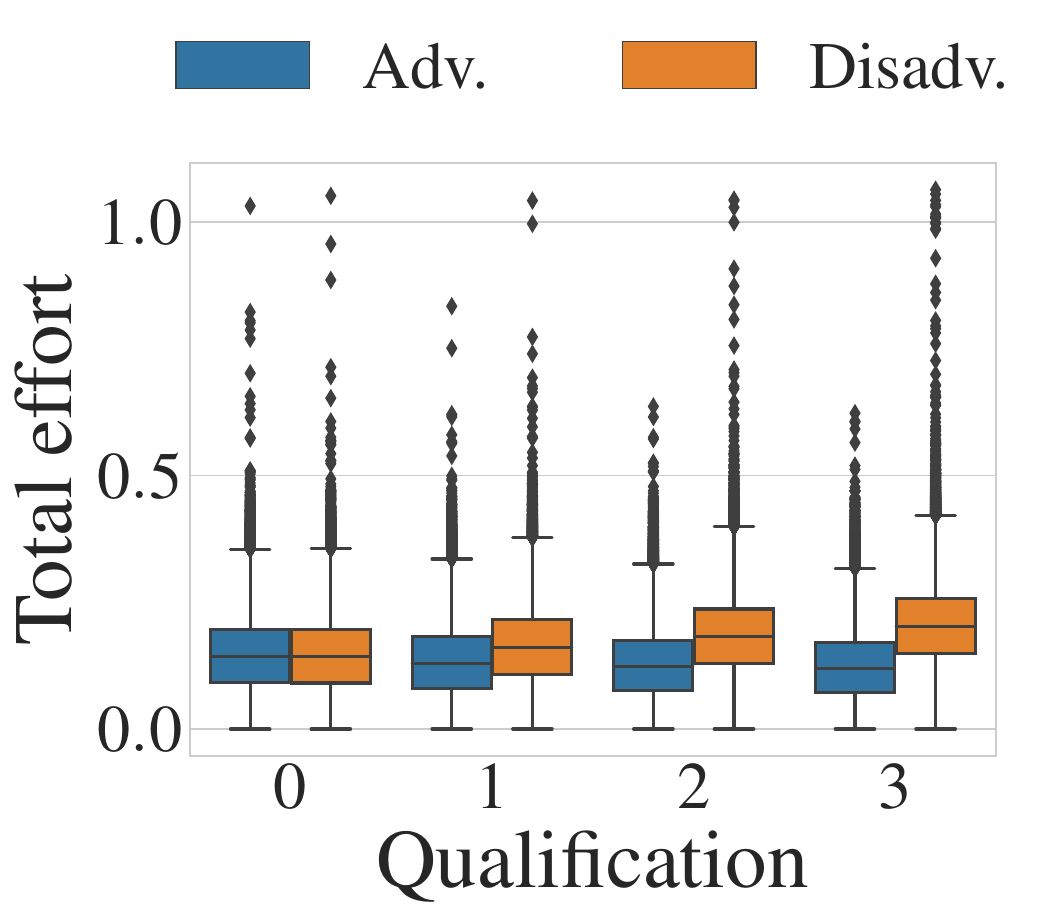}
         \caption{Total effort \\ Without mitigation}
     \end{subfigure}
     \begin{subfigure}[t]{0.24\textwidth}
         \centering
         \includegraphics[width=\textwidth]{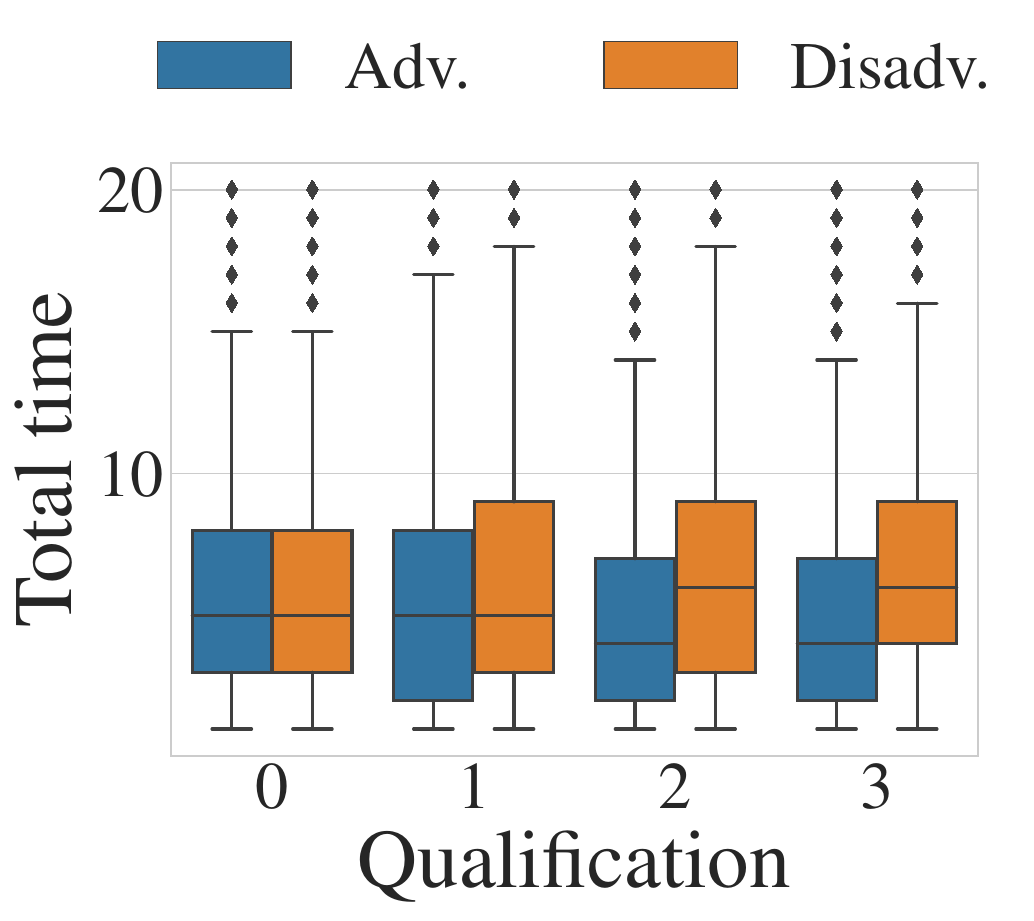}
         \caption{Total time\\Without mitigation}
     \end{subfigure}
     \rulesep
     \begin{subfigure}[t]{0.24\textwidth}
         \centering
         \includegraphics[width=\textwidth]{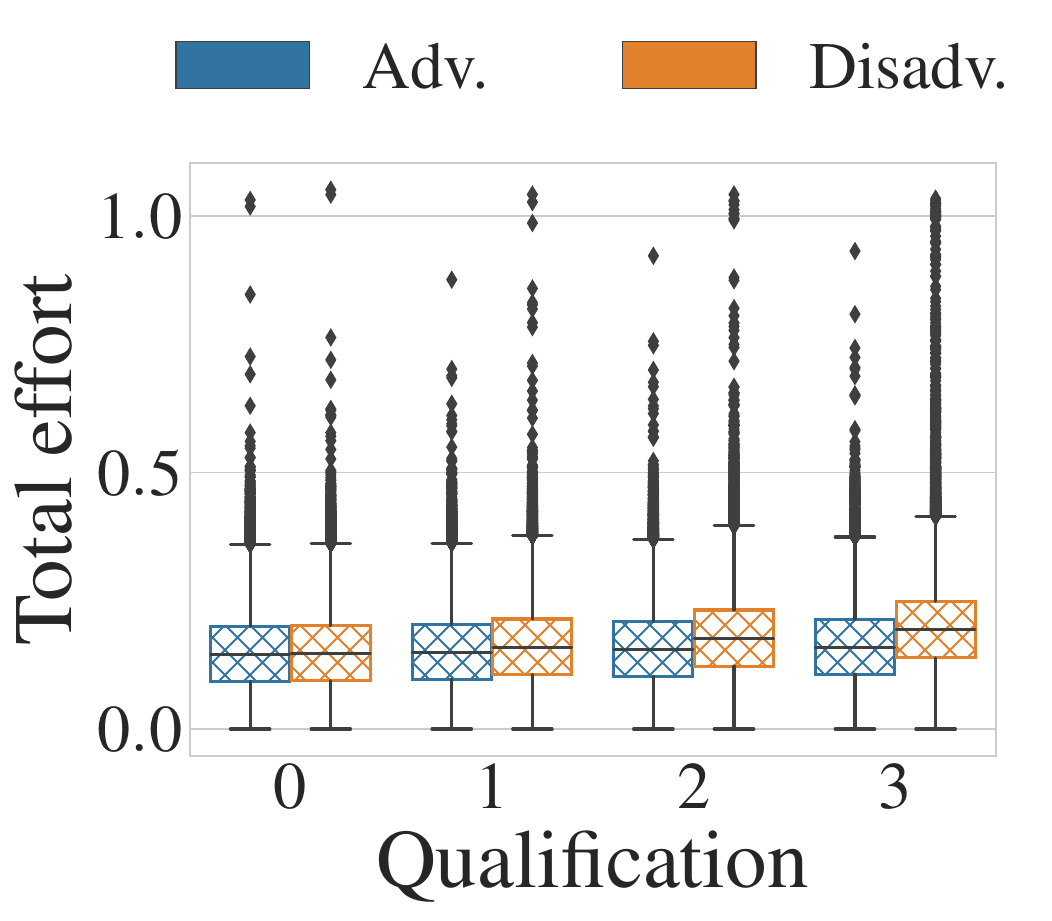}
         \caption{Total effort\\With mitigation}
     \end{subfigure}
     \begin{subfigure}[t]{0.24\textwidth}
         \centering
         \includegraphics[width=\textwidth]{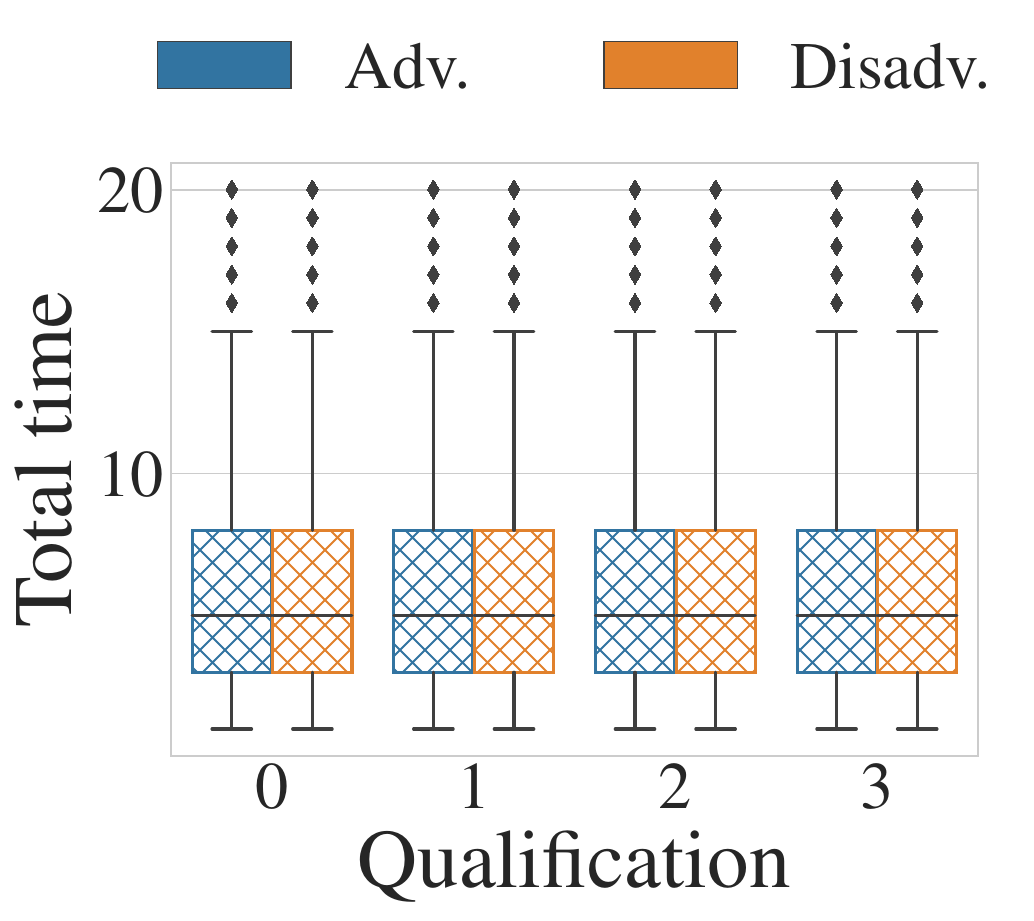}
         \caption{Total time\\With mitigation}
     \end{subfigure}

     \caption{Individual agents' total effort (figures \emph{a} and {c}) and total time to receive a positive outcome (figures \emph{b} and \emph{d}) by qualification $q$. Shown both without mitigation (figures \emph{a} and \emph{b}) and with mitigation (figures \emph{c} and \emph{d}). In all figures, $e_a = e_d$, meaning that in expectation, the disadvantaged population exerts twice as much effort as the advantaged population.}
     \label{fig:suppl1}
\end{figure*}

\begin{figure*}
     \centering
     \begin{subfigure}[t]{0.24\textwidth}
         \centering
         \includegraphics[width=\textwidth]{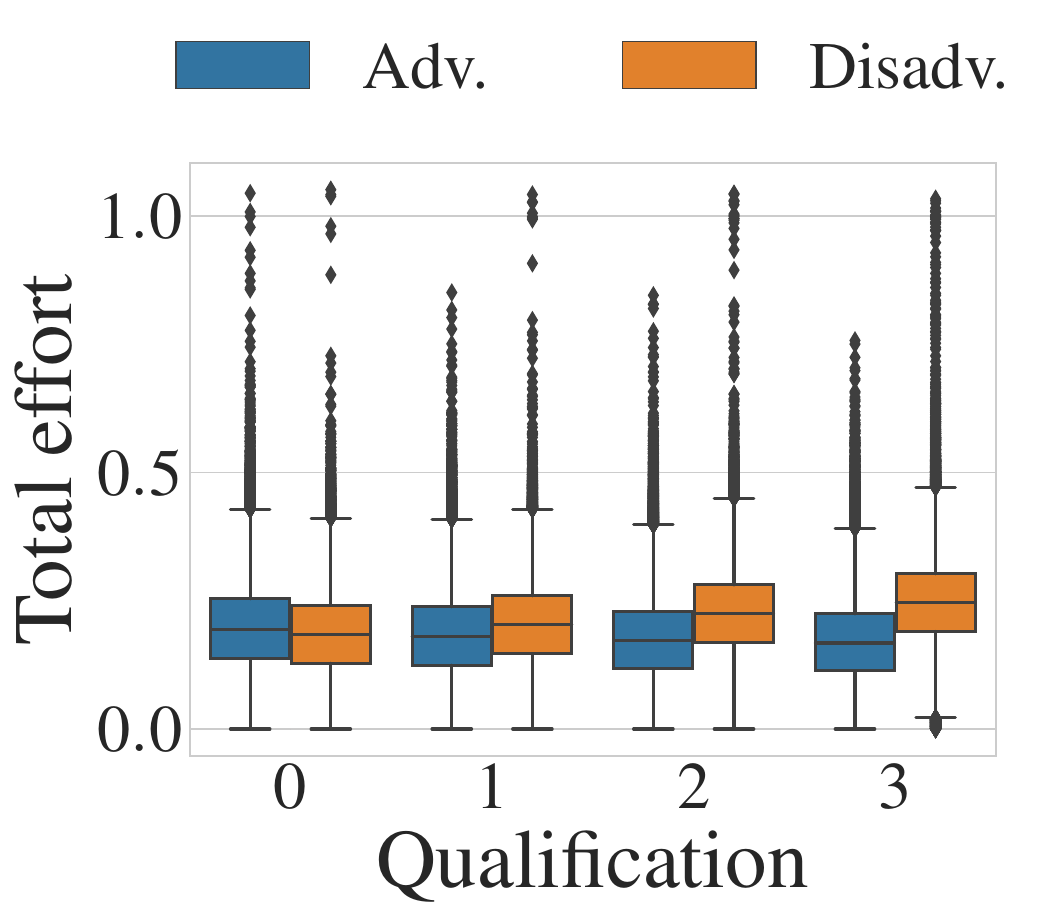}
         \caption{Total effort \\ Without mitigation}
     \end{subfigure}
     \begin{subfigure}[t]{0.24\textwidth}
         \centering
         \includegraphics[width=\textwidth]{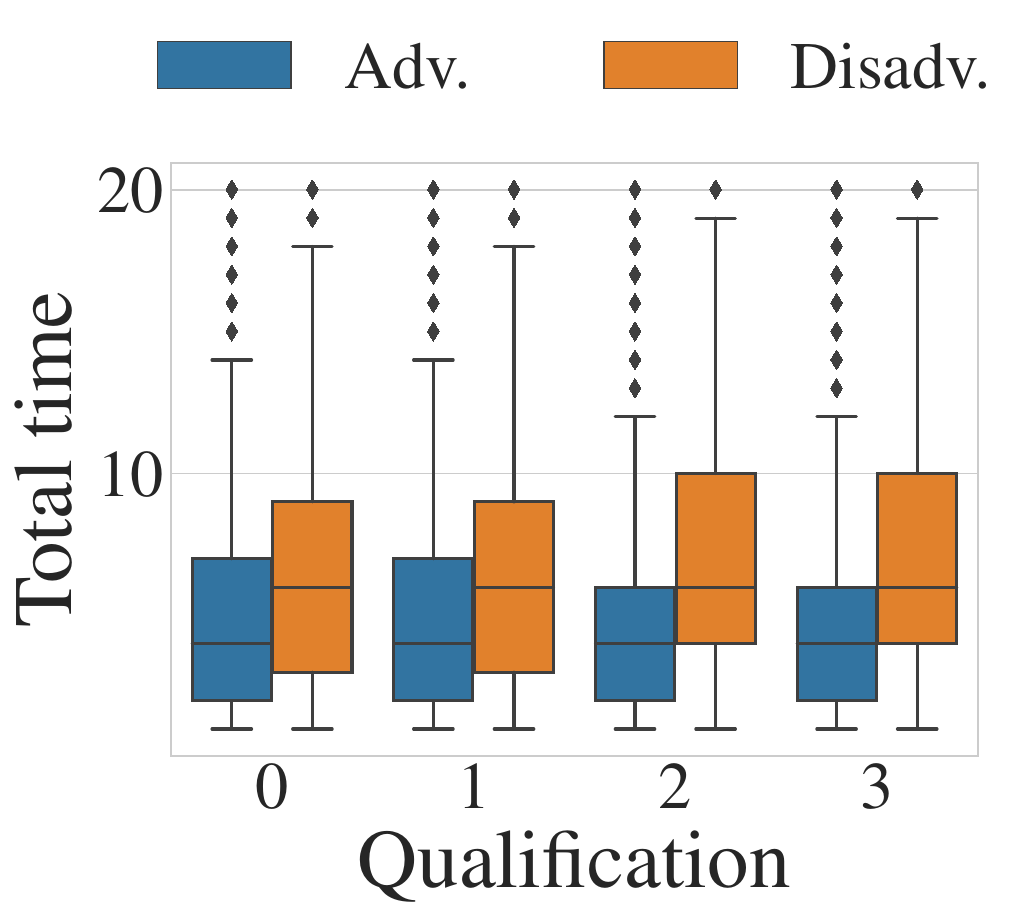}
         \caption{Total time\\Without mitigation}
     \end{subfigure}
     \rulesep
     \begin{subfigure}[t]{0.24\textwidth}
         \centering
         \includegraphics[width=\textwidth]{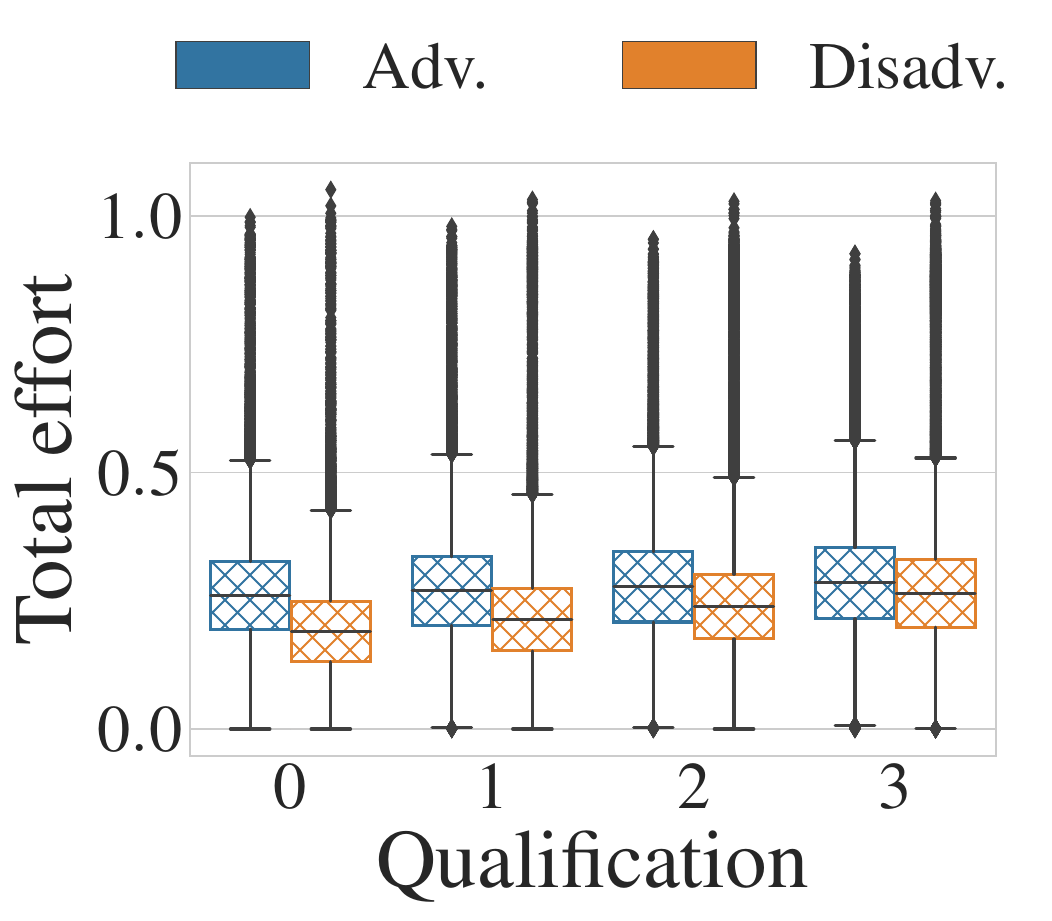}
         \caption{Total effort\\With mitigation}
     \end{subfigure}
     \begin{subfigure}[t]{0.24\textwidth}
         \centering
         \includegraphics[width=\textwidth]{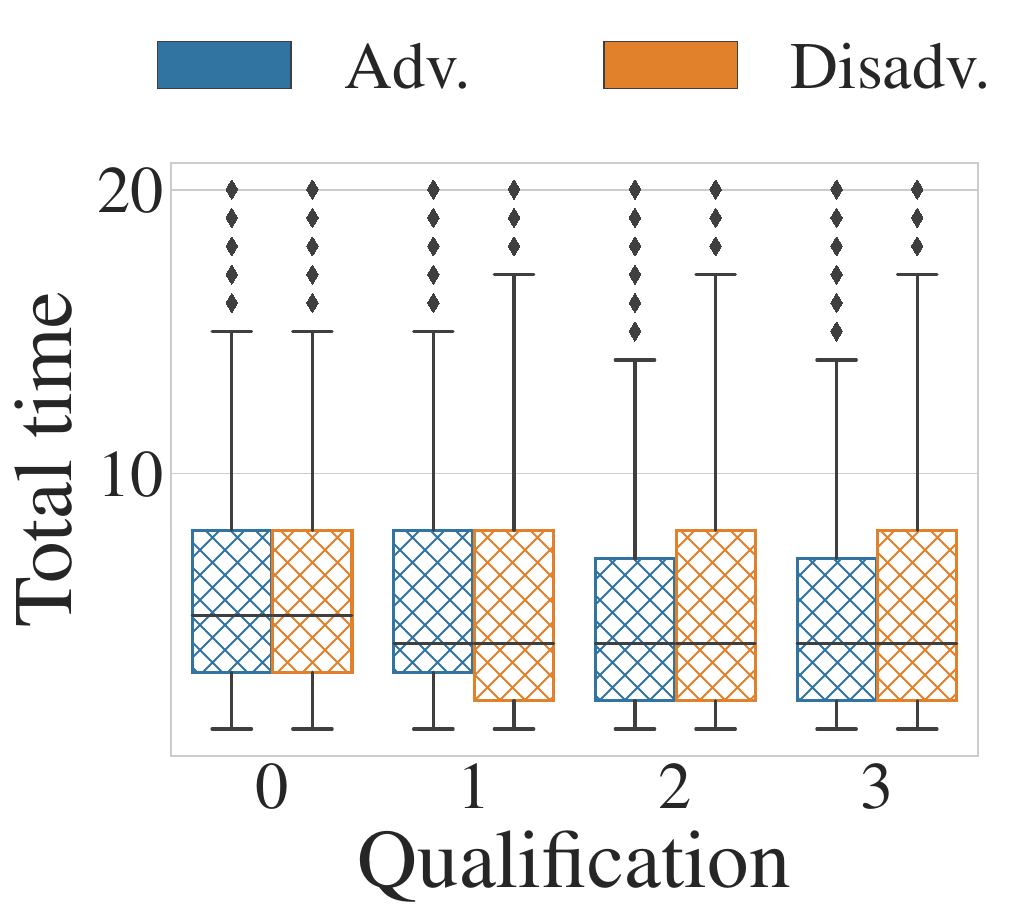}
         \caption{Total time\\With mitigation}
     \end{subfigure}

     \caption{Individual agents' total effort (figures \emph{a} and {c}) and total time to receive a positive outcome (figures \emph{b} and \emph{d}) by qualification $q$. Shown both without mitigation (figures \emph{a} and \emph{b}) and with mitigation (figures \emph{c} and \emph{d}). In all figures, $e_a > e_d$, meaning that in expectation, the disadvantaged population exerts twice as much effort as the advantaged population.}
     \label{fig:suppl2}
\end{figure*}

\begin{figure*}
\captionsetup[subfigure]{justification=centering}
     \centering
     \begin{subfigure}[t]{0.24\textwidth}
         \centering
         \includegraphics[width=\textwidth]{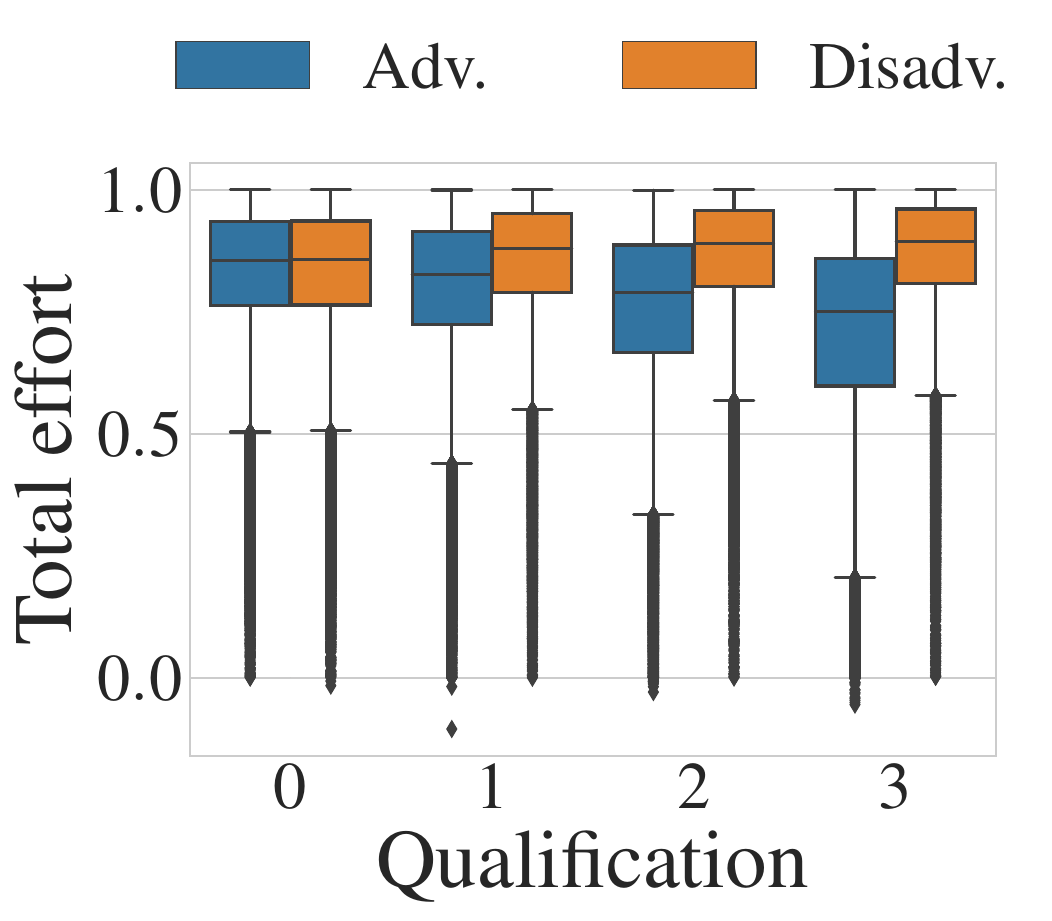}
         \caption{Total effort, $e_a = e_d$,  \\ with benchmarking mitigation strategy}
     \end{subfigure}
     \hspace{1.5cm}
     \begin{subfigure}[t]{0.24\textwidth}
         \centering
         \includegraphics[width=\textwidth]{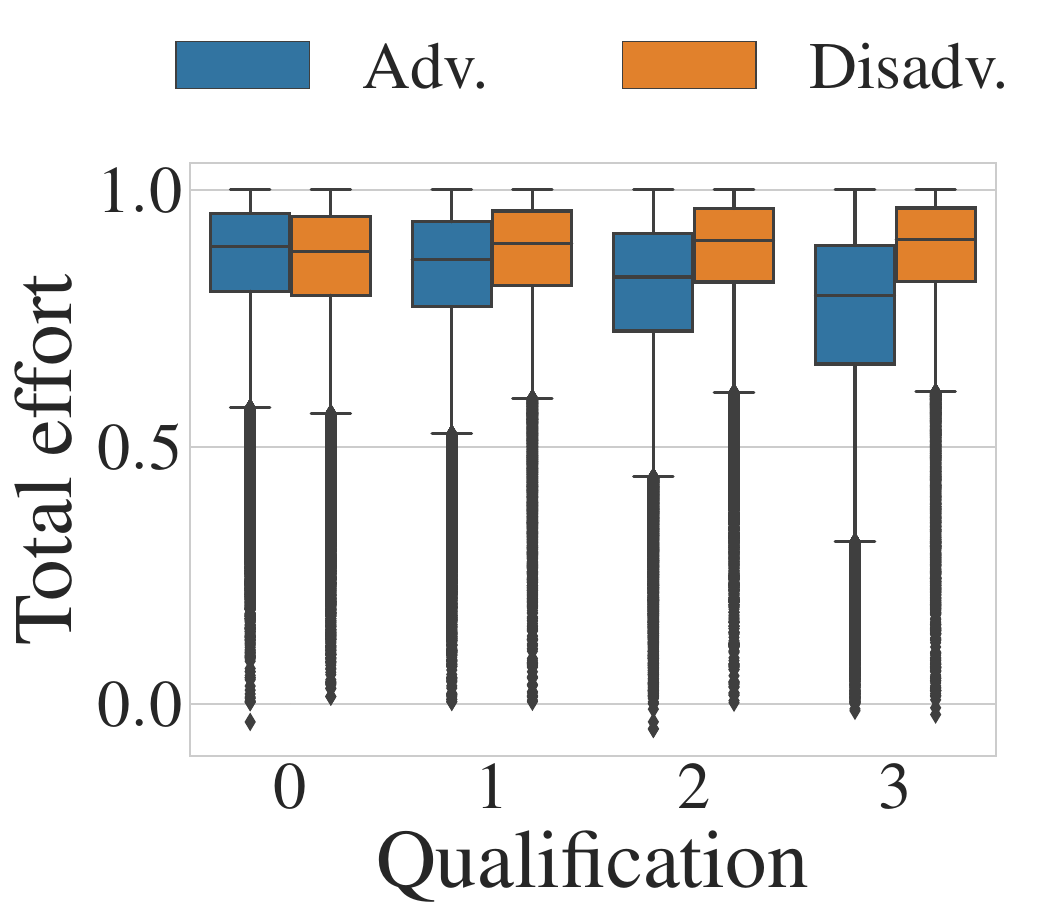}
         \caption{Total effort, $e_a > e_d$,  \\ with benchmarking mitigation strategy}
     \end{subfigure}
     \hspace{1.5cm}
     \begin{subfigure}[t]{0.24\textwidth}
         \centering
         \includegraphics[width=\textwidth]{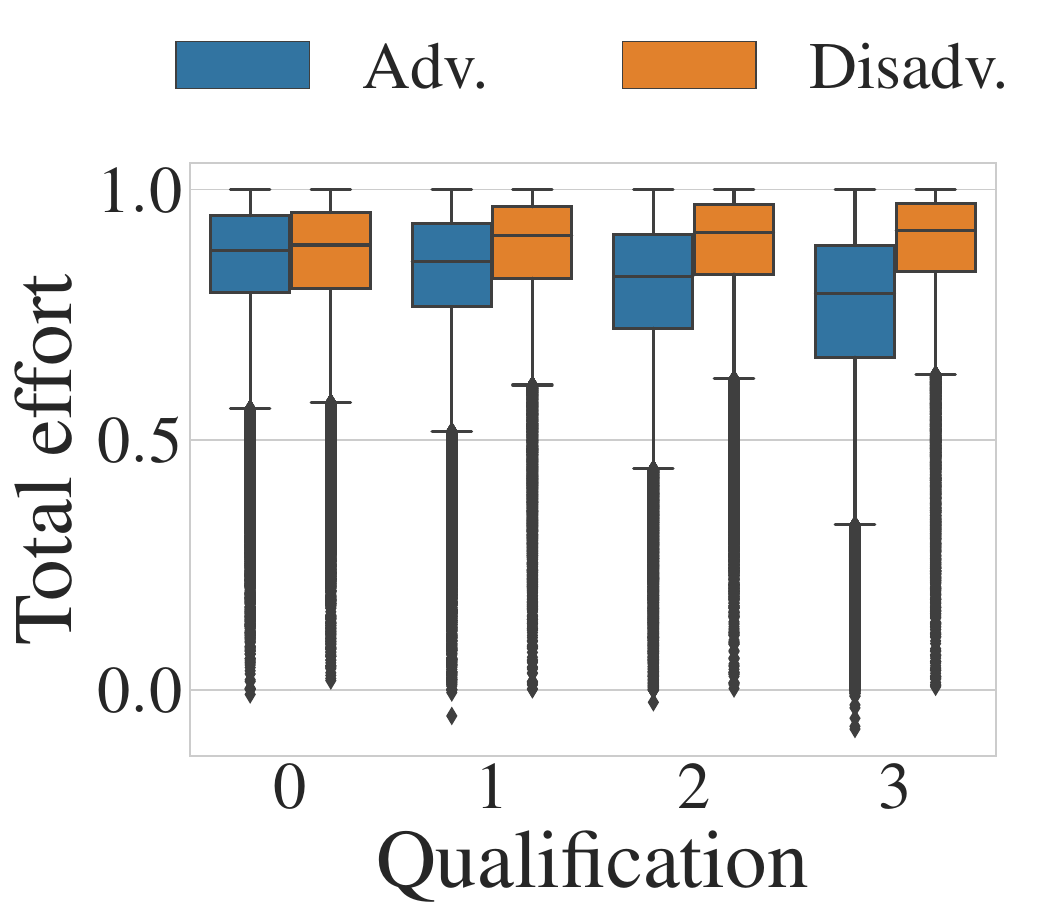}
         \caption{Total effort, $e_a < e_d$, \\With mitigation}
     \end{subfigure}

     \vspace{5mm}
     \begin{subfigure}[t]{0.24\textwidth}
         \centering
         \includegraphics[width=\textwidth]{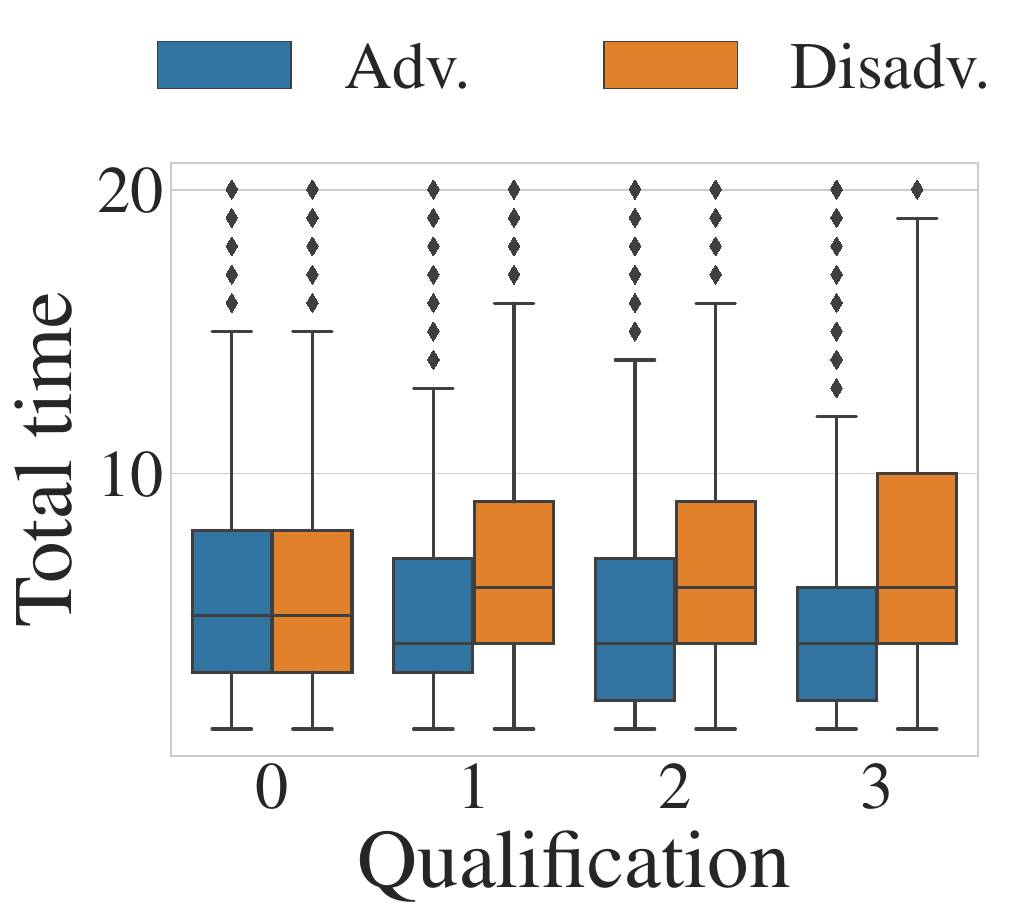}
         \caption{Total time, $e_a = e_d$, \\ with benchmarking mitigation strategy}
     \end{subfigure}
     \hspace{1.5cm}
     \begin{subfigure}[t]{0.24\textwidth}
         \centering
         \includegraphics[width=\textwidth]{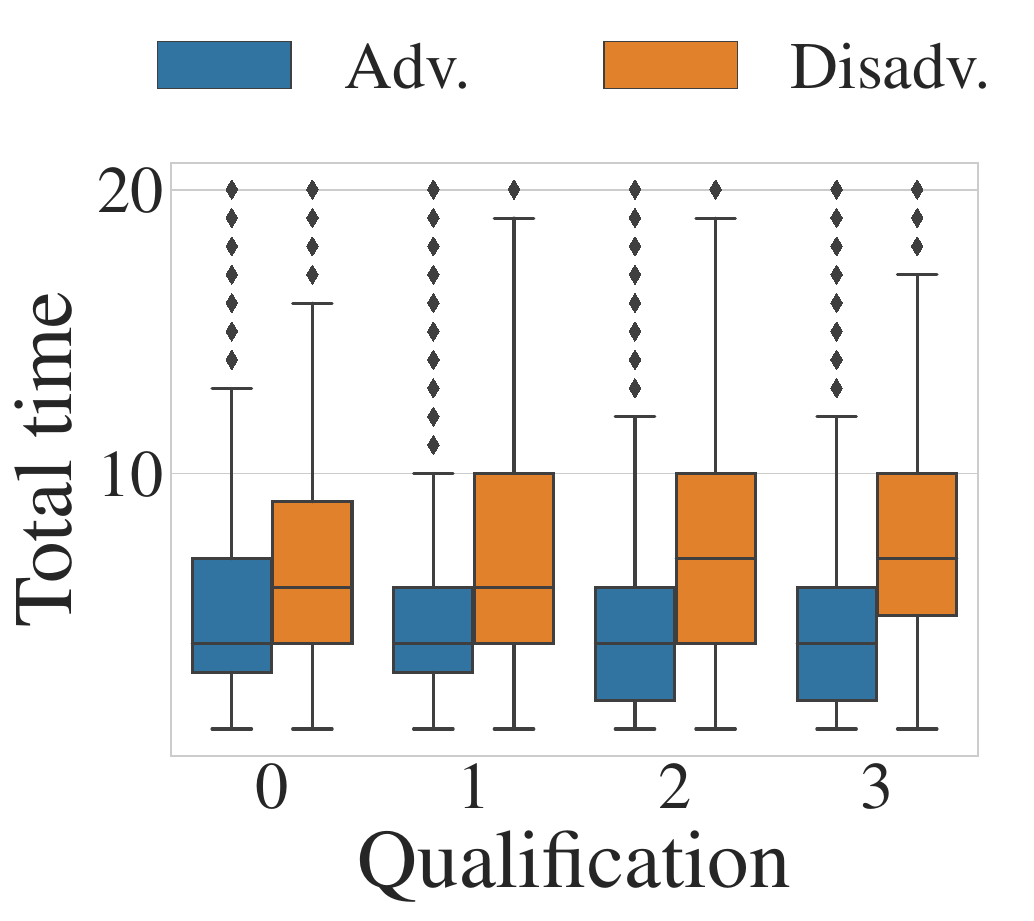}
         \caption{Total time, $e_a > e_d$, \\ with benchmarking mitigation strategy}
     \end{subfigure}
     \hspace{1.5cm}
     \begin{subfigure}[t]{0.24\textwidth}
         \centering
         \includegraphics[width=\textwidth]{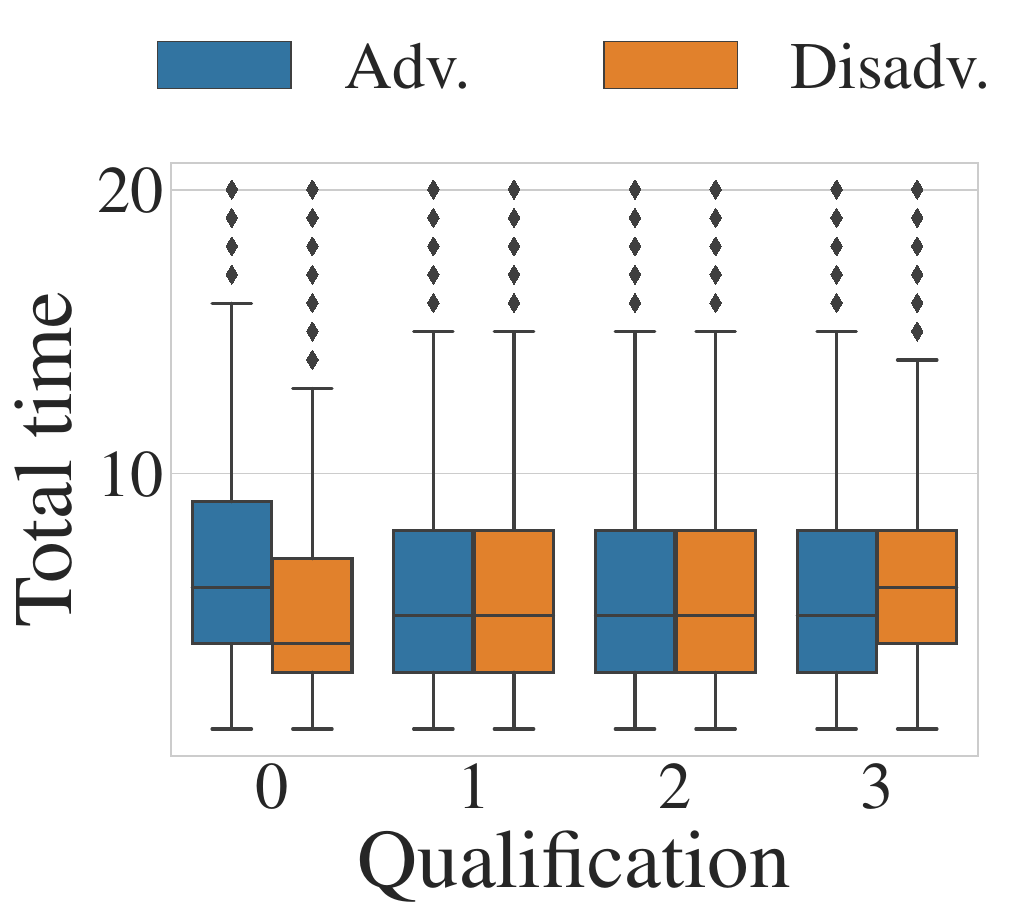}
         \caption{Total time, $e_a < e_d$, \\With mitigation}
     \end{subfigure}

     \caption{Individual agents' total effort (figures \emph{a-c}) and total time to receive a positive outcome (figures \emph{d-f}) by qualification $q$ when using the benchmarking mitigation strategy.}
     \label{fig:suppl3}
\end{figure*}

\begin{figure*}
     \centering
     \begin{subfigure}[t]{0.24\textwidth}
         \centering
         \includegraphics[width=\textwidth]{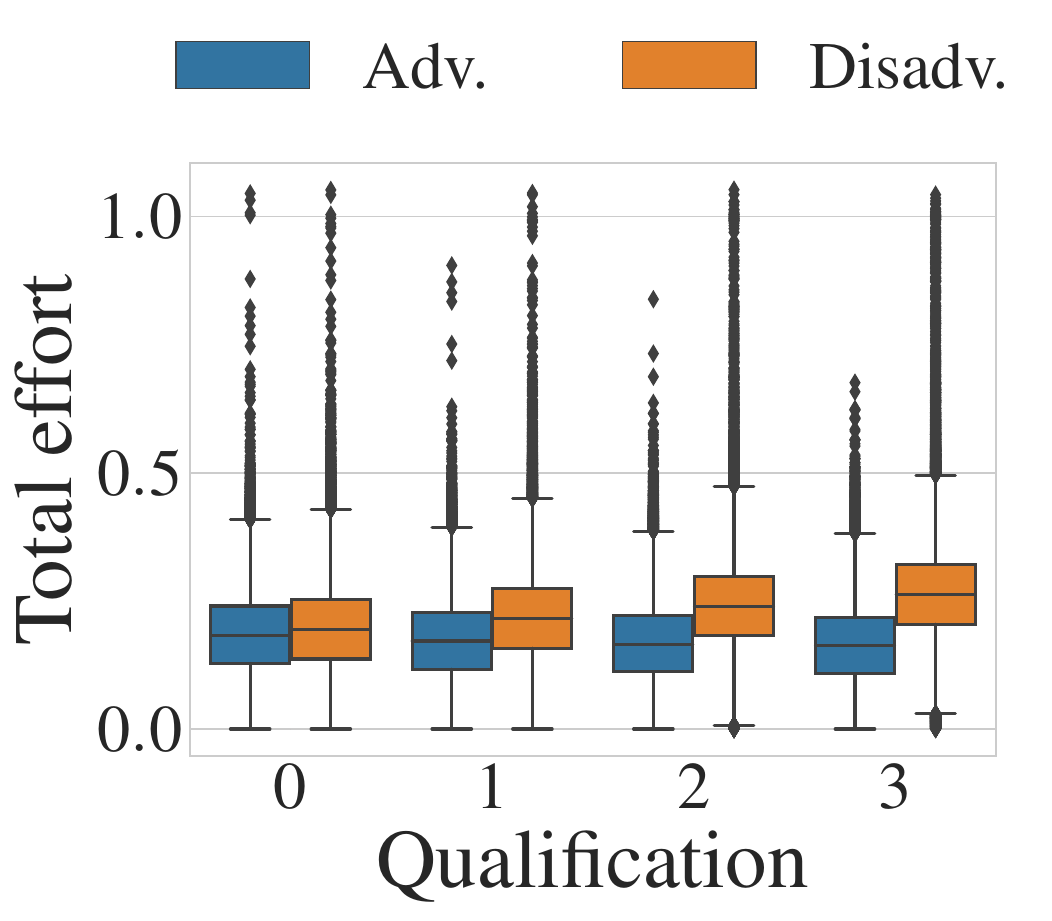}
         \caption{Total effort \\ Without mitigation}
     \end{subfigure}
     \begin{subfigure}[t]{0.24\textwidth}
         \centering
         \includegraphics[width=\textwidth]{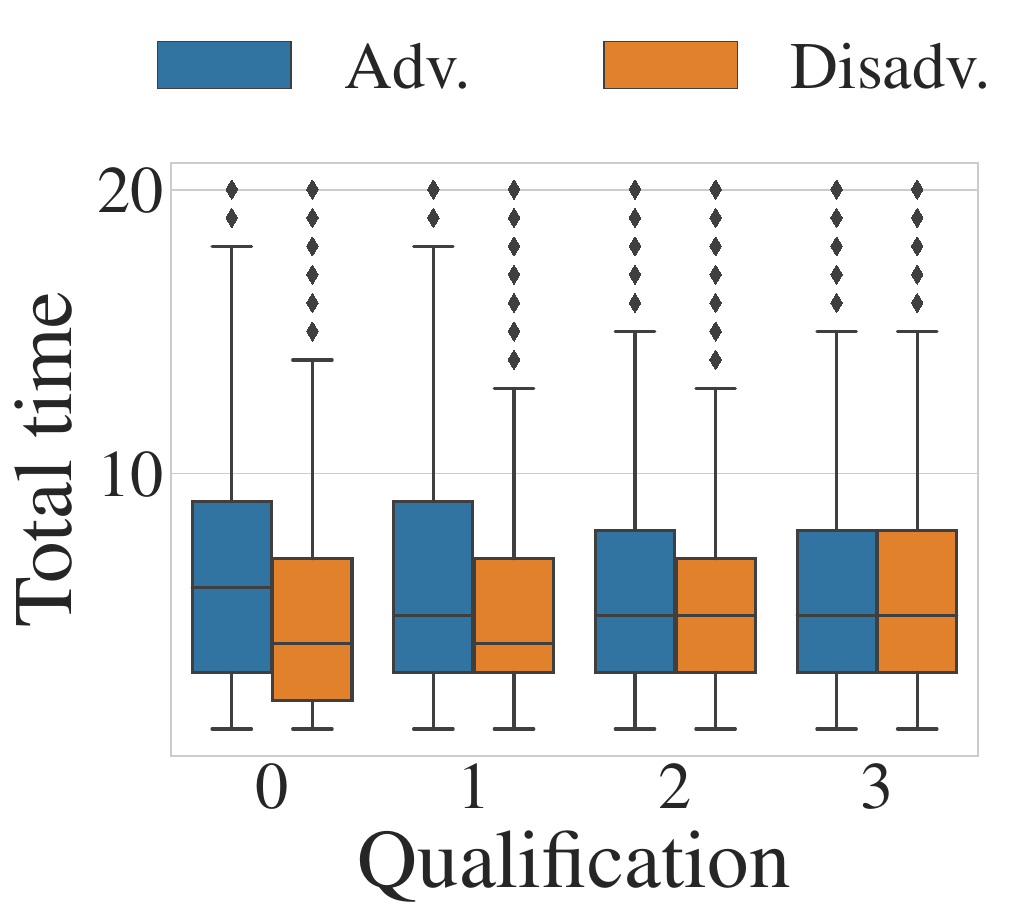}
         \caption{Total time\\Without mitigation}
     \end{subfigure}
     \rulesep
     \begin{subfigure}[t]{0.24\textwidth}
         \centering
         \includegraphics[width=\textwidth]{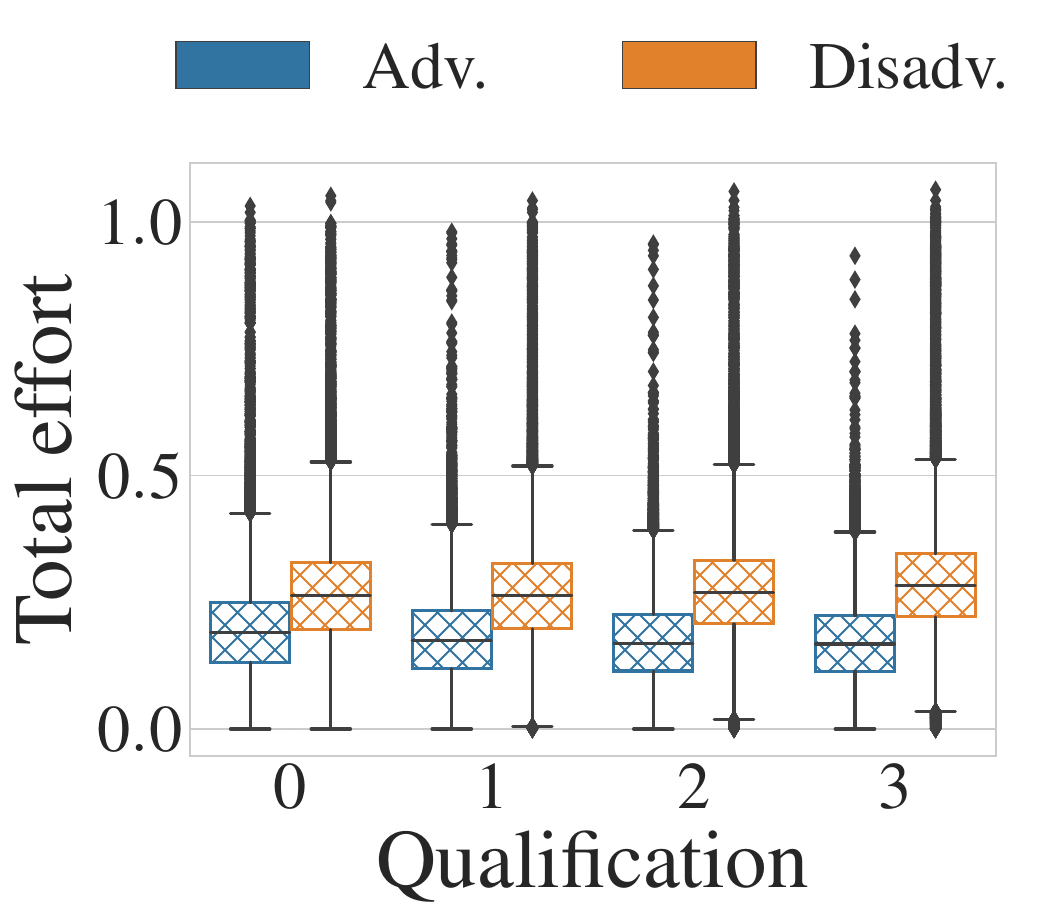}
         \caption{Total effort\\With mitigation}
     \end{subfigure}
     \begin{subfigure}[t]{0.24\textwidth}
         \centering
         \includegraphics[width=\textwidth]{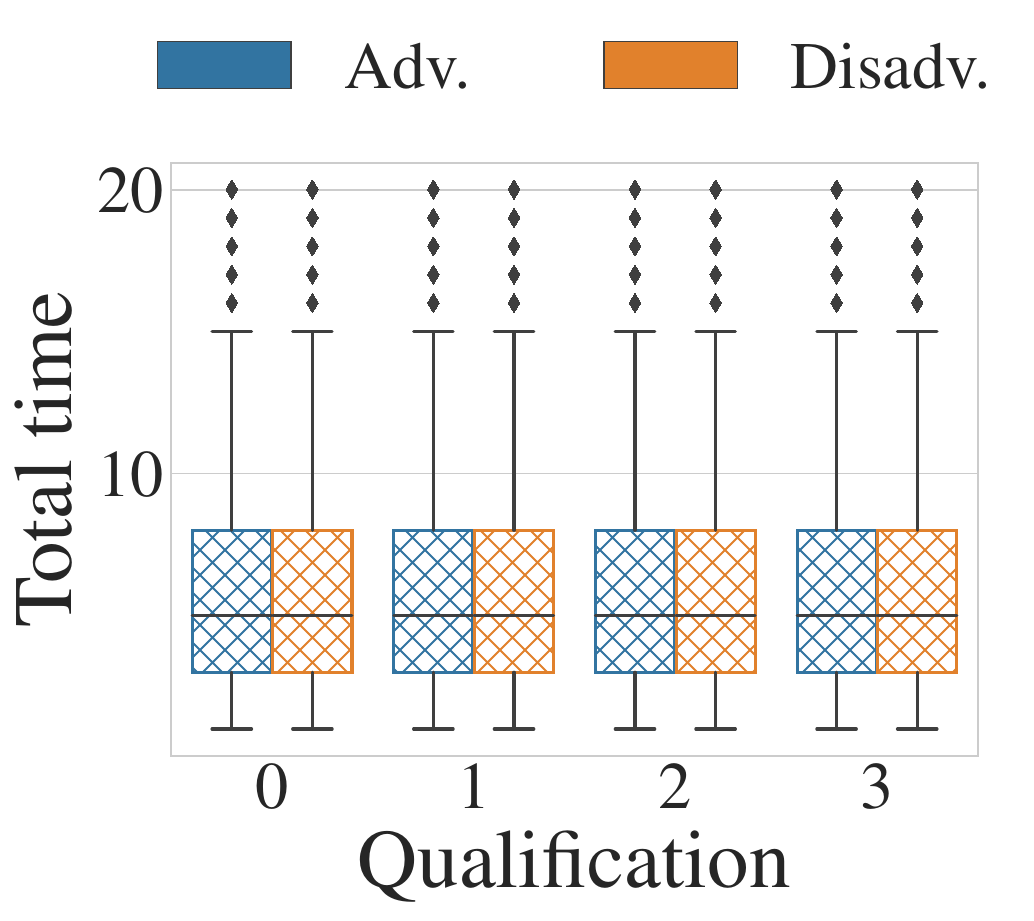}
         \caption{Total time\\With mitigation}
     \end{subfigure}

     \caption{Individual agents' total effort (figures \emph{a} and {c}) and total time to receive a positive outcome (figures \emph{b} and \emph{d}) by qualification $q$. Shown both without mitigation (figures \emph{a} and \emph{b}) and with \cns mitigation (figures \emph{c} and \emph{d}). In all figures, $e_a < e_b$, meaning that in expectation, the disadvantaged population exerts twice as much effort as the advantaged population. The figures show that despite exerting more total effort (\emph{a}), agents in the disadvantaged population still take nearly the same time (\emph{b}) to attain a positive outcome, illustrating that they cannot overcome disparities in initial circumstances without intervention (\emph{c}), (\emph{d}).}
     \label{fig:suppl4}
\end{figure*}

\end{document}